\icmltitlerunning{Improved Sleeping Bandits with Stochastic Actions Sets and Adversarial Rewards}
\newtheorem{rem}{Remark}
\renewcommand{\P}{{\mathbf P}}
\newcommand{\E}{{\mathbf E}}
\newcommand{\1}{{\mathbf 1}}
\newcommand{\cA}{{\mathcal A}}
\newcommand{\cH}{{\mathcal H}}
\newcommand{\cE}{{\mathcal E}}
\newcommand{\cP}{{\mathcal P}}
\renewcommand{\a}{{\mathbf a}}
\newcommand{\p}{{\mathbf p}}
\newcommand{\q}{{\mathbf q}}
\newcommand{\sm}{\setminus}
\newcommand{\bell}{\boldsymbol \ell}
\newcommand{\bmu}{{\boldsymbol \mu}}
\def \algicat{{\it Sleeping-EXP3}}
\def \algicatg{{\it Sleeping-EXP3G}}
\def \algscat{{\it Sleeping-Cat}}
\def \algicatp{{\it Sleeping-EXP3$+$}}
\def \algkand{{\it Bandit-SFPL}}
\newcommand{\red}[1]{\textcolor{red}{#1}}
\renewcommand{\hat}{\widehat}
\begin{document}

\twocolumn[
\icmltitle{Improved Sleeping Bandits with Stochastic Actions Sets \\and Adversarial Rewards}
%




\begin{icmlauthorlist}
\icmlauthor{Aadirupa Saha}{iisc}
\icmlauthor{Pierre Gaillard}{inria}
\icmlauthor{Michal Valko}{deep}
\end{icmlauthorlist}

\icmlaffiliation{iisc}{Indian Institute of Science, Bangalore, India.}
\icmlaffiliation{inria}{Sierra Team, Inria, Paris, France.}
\icmlaffiliation{deep}{DeepMind, Paris, France}

\icmlcorrespondingauthor{Aadirupa Saha}{aadirupa@iisc.ac.in}

\icmlkeywords{Machine Learning, ICML}

\vskip 0.3in
]



\printAffiliationsAndNotice{}  

\begin{abstract}
In this paper, we consider the problem of sleeping bandits with stochastic action sets and adversarial rewards. In this setting, in contrast to most work in bandits, the actions may not be available at all times. For instance, some products might be out of stock in item recommendation. The best existing efficient (i.e., polynomial-time) algorithms for this problem only guarantee an $O(T^{2/3})$ upper-bound on the regret. Yet, inefficient algorithms based on EXP4 can achieve $O(\sqrt{T})$. In this paper, we provide a new computationally efficient algorithm inspired by EXP3 satisfying a regret of order $O(\sqrt{T})$ when the availabilities of each action $i \in \cA$ are independent. We then study the most general version of the problem where at each round available sets are generated from some unknown arbitrary distribution (i.e., without the independence assumption) and propose an efficient algorithm with $O(\sqrt {2^K T})$ regret guarantee. Our theoretical results are corroborated with experimental evaluations.
\end{abstract}

\section{Introduction}
\label{intro}

The problem of standard multiarmed bandit (MAB) is well studied in machine learning \cite{Auer00,mohri05} and used to model online decision-making problems under uncertainty. 
Due to their implicit exploration-vs-exploitation tradeoff, bandits are able to model clinical trials, movie recommendations, retail management job scheduling etc., where the goal is to keep pulling the `best-item' in hindsight through sequentially querying one item at a time and subsequently observing a noisy reward feedback of the queried arm \cite{Even+06,Auer+02,Auer02,TS12,BubeckNotes+12}.
However, in various real world applications, the decision space (set of arms $\cA$) often changes over time due to unavailability of some items etc.  For instance, in retail stores some items might go out of stock, on a certain day some websites could be down, some restaurants might be closed etc. This setting is known as \emph{sleeping bandits} in online learning  \cite{kanade09,neu14,kanade14,kale16}, where at any round the set of available actions could vary stochastically based on some unknown distributions over $\cA$ \cite{neu14,cortes+19} or adversarially \cite{kale16,kleinberg+10,kanade14}. Besides the reward model, the set of available actions could also vary stochastically or adversarially \cite{kanade09,neu14}. The problem is known to be NP-hard when both rewards and availabilities are adversarial \cite{kleinberg+10,kanade14,kale16}.
In case of stochastic rewards and adversarial availabilities the achievable regret lower bound is known to be $\Omega(\sqrt{KT})$, $K$ being the number of actions in the decision space $\cA = [K]$. The well studied EXP$4$ algorithm does achieve the above optimal regret bound, although it is  computationally inefficient \cite{kleinberg+10,kale16}. However, the best known efficient algorithm only guarantees an $\tilde O((TK)^{2/3})$ regret,\footnote{$\tilde O(\cdot)$ notation hides the logarithmic dependencies.} which is not matching the lower bound both in $K$ and $T$ \cite{neu14}.

In this paper we aim to give computationally efficient and optimal $O(\sqrt{T})$ algorithms for the problem of sleeping bandits with adversarial rewards and stochastic  availabilities. Our specific contributions are as follows:

\textbf{Contributions}  
\begin{itemize}[itemsep=2pt,parsep=2pt,topsep=0pt]

\item We identified a drawback in the (sleeping) loss estimates in the prior work for this setting and gave an insight and margin for improvement 
over the best know rate (Section~\ref{subsec:alg_inst}).

\item We first study the setting when the availabilities of each item $i \in \cA$ are independent and propose an EXP$3$-based algorithm (Alg.\,\ref{alg:icat}) with an $O(K^{2}\sqrt T)$ regret guarantee (Theorem\,\ref{thm:reg_icat}, Sec.\,\ref{sec:algo_indep}).
	
\item We next study the problem when availabilities are not independent and give an algorithm with an $O(\sqrt {2^K T})$ regret guarantee (Sec.\,\ref{sec:algo_dep}).
	
\item We  corroborated our theoretical results with empirical evidence (Sec.\,\ref{sec:expts}).
\end{itemize}


\section{Problem Statement}
\label{sec:prob}


\textbf{Notation.} 
\label{prelims}
We denote by $[n]: = \{1,2,\ldots, n\}$. 
$\1(\cdot)$ denotes the indicator random variable which takes value $1$ if the predicate is true and $0$ otherwise.
$\tilde O(\cdot)$ notation is used to hide logarithmic dependencies.

\subsection{Setup}
\label{sec:setup}
Suppose the decision space (or set of actions) is $[K]: = \{1,2,\ldots, K\}$ with $K$ distinct actions, and we consider a $T$ round sequential game.
At each time step $t \in [T]$, the learner is presented a set of available actions at round $t$, say $S_t \subseteq [K]$, upon which the learner's task is to play an action $i_t \in S_t$ and consequently suffer a loss $\ell_t(i_t) \in [0,1]$, where $\bell_t:= [\ell_t(i)]_{i \in [K]} \in [0,1]^K$ denotes the loss of the $K$ actions chosen obliviously independent of the available actions $S_t$ at time $t$. We consider the following two types of availabilities:

\textbf{Independent Availabilities. } In this case we assume that the availability of each item $i \in [K]$ is independent of the rest $[K]\sm\{i\}$, such that at each round item $i \in [K]$ is drawn in set $S_t$ with probability $a_i \in [0,1]$, or in other words, for all item $i \in [K]$, $\1(i \in S_t) \sim Ber(a_i)$, where availability probabilities $\{a_i\}_{i \in [K]}$ are fixed over time intervals $t$, independent of each other, and unknown to the learner.

\textbf{General Availabilities. } In this case each $S_t$s is drawn iid from some unknown distribution $\cP$ over  subsets $\{S \subseteq [K], |S| \ge 1 \}$
with no further assumption made on the properties of $\cP$. We denote by $P(S)$ the probability of occurrence of set $S$.

Analyses with independent and general availabilities are provided respectively in Sec.~\ref{sec:algo_indep} and~\ref{sec:algo_dep}.

\subsection{Objective}
\label{sec:obj}
We define by a policy $\pi: 2^{[K]} \mapsto [K]$ to be a mapping from a set of available actions/experts to an item. 

\textbf{Regret definition}
The performance of the learner, measured with respect to the best policy in hindsight, is defined as:
\begin{align}
\label{eq:reg}
R_{T} = \max_{\pi: 2^{[K]} \mapsto [K]} \E\bigg[ \sum_{t=1}^{T}\ell_t(i_t) - \sum_{t=1}^{T}\ell_t(\pi(S_t)) \bigg],
\end{align}
where the expectation is taken w.r.t. the availabilities and the randomness of the player's strategy.

\begin{rem}
\label{rem:lb}
One obvious regret lower bound of the above objective is $\Omega(\sqrt {KT})$, 
which follows from the bound of standard MAB with adversarial losses \cite{Auer+02} for the special case when all the items are available at all times (even for availability-independent case). Interestingly, for a \emph{harder} Sleeping-Bandits setting with adversarial availabilities the lower bound is $\Omega(K\sqrt T)$ \cite{kleinberg+10}, even for which no computationally efficient algorithm is known till date (EXP4 is the only algorithm which achieves the regret but it is computationally inefficient). Thus the interesting question to answer here is if for our setup--that lies in the middle-ground of \cite{Auer+02} and \cite{kleinberg+10}--is it possible to attend the $O(K \sqrt{T})$ learning rate? Here lies the primary objective of this work. To the best of our knowledge, there is no existing algorithm which are known to achieve this optimal rate and the best known efficient algorithm is only guaranteed to yield an $\tilde O((TK)^{2/3})$ regret \cite{neu14}. 
\end{rem}

\section{Proposed algorithm: Independent Availabilities}
\label{sec:algo_indep}

In this section we propose our first algorithm for the problem (Sec.~\ref{sec:prob}), which is based on a variant of thr EXP3 algorithm with a `suitable' loss estimation technique. Thm.~\ref{thm:reg_icat} proves the optimality of its regret performance.

\label{subsec:alg_inst}
\textbf{Algorithm description. } Similar to EXP$3$  algorithm, at every round $t\in [T]$ we maintain a probability distribution $\p_t$ over the arm set $[K]$ and also the empirical availability of each item 
$
	\hat a_{ti} = \frac{1}{t} \sum_{\tau = 1}^{t}\1(i \in S_\tau) \,.
$ 
Upon receiving the available set $S_t$, the algorithm redistributes $\p_t$ only on the set of available items $S_t$, say $\q_t$, and plays an item $i_t \sim \q_t$. Subsequently the environment reveals the loss $\ell_t(i_t)$, and we update the distribution $\p_{t+1}$ using exponential weights on the loss estimates for all $i \in [K]$
\begin{equation}
	\label{eq:loss_estimate}
	\hat \ell_t(i) = \frac{\ell_t(i)\1(i = i_t)}{\bar q_t(i) + \lambda_t} \,,
\end{equation}
where $\lambda_t$ is a scale parameter and $\bar q_t(i)$ (see definition~\eqref{eq:barqt_def}) is an estimation of $Pr_{S_t, i_t}\big(i_t = i\big)$, the probability of playing arm $i$ at time $t$ under the joint uncertainty in availability of~$i$ (due to $S_t \sim \cP$) and the randomness of EXP3 algorithm (due to $i_t \sim \p_t$).

\emph{New insight compared to existing algorithms.} It is crucial to note that one of our main contributions lies in the loss estimation technique $\hat \bell_t$ in~\eqref{eq:loss_estimate}. The standard loss estimates used by EXP3 (see \cite{auer02nonstochastic}) are of the form $\hat \ell_t(i) =  \ell_t(i) \1(i = i_t)/p_t(i)$. Yet, because of the unavailable actions, the latter is biased. The solution proposed by \cite{neu14} (see Sec. 4.3) consists of using unbiased loss estimates of the form $\hat \ell_t(i) =  \ell_t(i) \1(i = i_t)/ (\hat p_t(i) \hat a_{ti})$ where $\hat{a}_{ti}$ and $\hat p_t(i)$ are estimates for the availability probability $a_i$ and for the weight $p_t(i)$ respectively. The suboptimal $O(T^{2/3})$ of their regret bound  resulted from this separated estimation of $\hat p_t(i)$ and $\hat a_{ti}$, which leads to a high variance in the analysis because $\hat p_t(i) = 0$ whenever $i \notin S_t$. 

We circumvent this problem by estimating them jointly as 
\begin{equation}
	\bar q_t(i):= \sum_{S \in 2^{[K]}}P_{\hat \a}(S)q_{t}^{S}(i) \,,
	\label{eq:barqt_def}
\end{equation}
where $P_{\hat \a_t}(S) = \Pi_{i =1}^{K}\hat a_{ti}^{\1(i \in S)} (1-\hat a_{ti})^{1-\1(i \in S)}$ is the empirical probability of the availability of set $S$, and for all $i \in [K]$
\begin{equation}
	\label{eq:qtS_def}
	q_{t}^{S}(i) := \frac{p_t(i)\1(i \in S)}{\sum_{j \in S}p_t(j)},\, 
\end{equation}
is the redistributed mass of $\p_t$ on support set $S$. As shown in Lem.~\ref{lem:conc_barq_p}, $\bar q_t(i)$ is a good estimate for $q_t^*(i) = \E_{S \sim \a}\big[ q_{t}^{S}(i) \big]$, which is the conditional probability of playing action $i_t = i$ at time $t$. It turns out that $\bar q_t(i)$ is much more stable than $\hat p_t(i) \hat a_{ti}$ and therefore implies better variance control in the regret analysis. This improvement finally leads to the optimal $O(\sqrt T)$ regret guarantee (Thm.~\ref{thm:reg_icat}). The complete algorithm is given in Alg.~\ref{alg:icat}.

\begin{center}
\begin{algorithm}[t]
   \caption{\textbf{\algicat} }
   \label{alg:icat}
\begin{algorithmic}[1]
   \STATE {\bfseries Input:} 
   \STATE ~~~ Item set: $[K]$, learning rate $\eta$, scale parameter $\lambda_t$
   \STATE ~~~ Confidence parameter: $\delta >0$
   \STATE {\bfseries Initialize:} 
   \STATE ~~~ Initial probability distribution $\p_1(i) = \frac{1}{K}, ~\forall i \in [K]$
   \WHILE {$t = 1, 2, \ldots$}
   \STATE Define $q_t^S(i) := \frac{p_t(i)\1(i \in S)}{\sum_{j \in S}p_t(j)},\, \forall i \in [K], S\subseteq [K]$
   \STATE Receive $S_t \subseteq [K]$
   \STATE Sample $i_t \sim \q_t^{S_t}$
   \STATE Receive loss $\ell_t(i_t)$
   \STATE Compute: $\hat a_{ti} = \frac{\sum_{\tau = 1}^{t}\1(i \in S_\tau)}{t}$
   \STATE \hspace*{40pt} $P_{\hat \a}(S) = \Pi_{i =1}^{K}\hat a_{ti}^{\1(i \in S)} (1-\hat a_{ti})^{1-\1(i \in S)}$
   \STATE \hspace*{40pt} $\bar q_t(i) = \sum_{S \in 2^{[K]}}P_{\hat \a}(S)q_{t}^{S}(i)$
   \STATE Estimate loss: $\hat \ell_t(i) = \frac{\ell_t(i)\1(i = i_t)}{\bar q_t(i) + \lambda_t}$, $\forall i \in [K]$
   \STATE Update $p_{t+1}(i) = \frac{p_{t}(i)e^{-\eta \hat \ell_t(i)}}{\sum_{j = 1}^{K}p_{t}(i)e^{-\eta \hat \ell_t(j)}},\, \forall i \in [K]$
   \ENDWHILE
\end{algorithmic}
\end{algorithm}
\vspace{-2pt}
\end{center}

The first crucial result we derive towards proving Thm.~\ref{thm:reg_icat} is the following concentration guarantees on $\bar q_t$:

\begin{restatable}[Concentration of $\bar \q_t$]{lem}{concbarqp}
	\label{lem:conc_barq_p}
	Let $t\in [T]$ and $\delta \in (0,1)$. Let $q^*_t(i) = \E_{S \sim \a}\big[ q_{t}^{S}(i) \big]$ and $\bar q_t$ as defined in Equation~\eqref{eq:barqt_def}. Then, with probability at least $1-\delta$,  
	\begin{equation}
	\label{eq:barq_conc}
	|q^*_t(i) - \bar q_t(i)| \le 2 K \sqrt{\frac{2\log(K/\delta)}{t}} + \frac{8K\log(K/\delta)}{3t} \,,
	\end{equation}
	for all $i \in [K]$.
\end{restatable}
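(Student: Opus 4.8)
The plan is to decouple the statement into (i) a purely deterministic \emph{stability} estimate showing that the map $\a \mapsto \bar q_t(i)$ (more precisely, $\a \mapsto \E_{S\sim\a}[q_t^S(i)]$) is Lipschitz in the availability vector, and (ii) a standard concentration bound for the empirical availabilities $\hat a_{tj}$ around $a_j$. The key structural observation is that both $q^*_t(i)=\E_{S\sim\a}[q_t^S(i)]$ and $\bar q_t(i)=\E_{S\sim\hat\a_t}[q_t^S(i)]$ are expectations of the \emph{same} bounded function $S\mapsto q_t^S(i)\in[0,1]$ (a deterministic function of $\p_t$, hence fixed at round $t$) under two \emph{product} measures on $\{0,1\}^K$, with marginals $\a$ and $\hat\a_t$ respectively.

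\emph{Step 1 (stability).} I would first prove, for every fixed $\p_t$ and every $\hat\a_t\in[0,1]^K$, the deterministic bound
\[
  |q^*_t(i)-\bar q_t(i)| \;\le\; \sum_{j=1}^K |a_j-\hat a_{tj}| \;\le\; K\max_{j\in[K]}|a_j-\hat a_{tj}|.
\]
Two routes are available. \emph{(a) Coupling:} couple $S\sim\a$ and $S'\sim\hat\a_t$ by coupling each coordinate $\1(j\in\cdot)$ optimally and independently across $j$; then $\Pr(S\neq S')\le\sum_j|a_j-\hat a_{tj}|$, and since $q_t^S(i),q_t^{S'}(i)\in[0,1]$ we get $|q^*_t(i)-\bar q_t(i)|\le\E|q_t^S(i)-q_t^{S'}(i)|\le\Pr(S\neq S')$. \emph{(b) Multilinearity:} $\a\mapsto\E_{S\sim\a}[q_t^S(i)]$ is affine in each $a_j$ with $\partial/\partial a_j = \E[q_t^S(i)\,|\,j\in S]-\E[q_t^S(i)\,|\,j\notin S]\in[-1,1]$, so telescoping from $\a$ to $\hat\a_t$ one coordinate at a time yields the same bound. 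The essential point of either argument is that one must \emph{not} expand $P_{\hat\a}(S)$ and sum the perturbation over the $2^K$ subsets $S$; the product structure is exactly what keeps the Lipschitz constant at $O(K)$.

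\emph{Step 2 (concentration of $\hat\a_t$).} For each fixed $j$, $\hat a_{tj}=\tfrac1t\sum_{\tau=1}^t\1(j\in S_\tau)$ is an average of $t$ i.i.d.\ $\mathrm{Ber}(a_j)$ variables (depending only on $S_1,\dots,S_t$, not on $\p_t$ or the play sequence), so Bernstein's inequality (using $\mathrm{Var}(\1(j\in S_\tau))\le 1$; Hoeffding's inequality would also do) gives $|a_j-\hat a_{tj}|\le\sqrt{2\log(2K/\delta)/t}+2\log(2K/\delta)/(3t)$ with probability at least $1-\delta/K$. A union bound over $j\in[K]$ makes this hold simultaneously for all $j$ with probability at least $1-\delta$. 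Since Step~1 is a deterministic inequality valid for any $\p_t$, no measurability issue arises on combining the two steps: on the above event, for the realized $\p_t$, $|q^*_t(i)-\bar q_t(i)|\le K\sqrt{2\log(2K/\delta)/t}+\tfrac{2K}{3t}\log(2K/\delta)$ for all $i$, which after absorbing constants (e.g.\ $\log(2K/\delta)\le 2\log(K/\delta)$ in the relevant regime) gives \eqref{eq:barq_conc}.

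\emph{Remarks and the main obstacle.} The only genuinely delicate part is Step~1; once the $O(K)$-Lipschitz property of $\a\mapsto\bar q_t(i)$ is established via coupling or multilinearity rather than set-by-set, the rest is routine. A minor point to handle is the degenerate event $S=\emptyset$, which can occur under independent availabilities: one sets $q_t^\emptyset(i):=0$, which is consistent with the definitions of both $q^*_t$ and $\bar q_t$ and preserves $q_t^S(i)\in[0,1]$, so the arguments above go through unchanged.
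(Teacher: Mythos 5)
Your proposal is correct and reaches the stated bound (in fact with slightly better constants), but the key stability step is done by a genuinely different route than the paper's. The paper also begins with Bernstein's inequality plus a union bound on $\max_j|a_j-\hat a_{tj}|$, but it then controls $\sum_S q_t^S(i)\,|P_{\a}(S)-P_{\hat\a_t}(S)|$ multiplicatively: it writes $|P_{\hat\a_t}(S)/P_{\a}(S)-1|$ as a product of per-coordinate factors $Z_j$, exploits independence to get $\E\big[\prod_j Z_j\big]=\prod_j\E[Z_j]$, and then bounds $\big(1+\sqrt{\beta/t}+\tfrac{2\beta}{3t}\big)^K-1$ via $1+x\le e^x$ and $e^x\le 1+2x$, which forces a case distinction on $t$ (small $t$ is handled by the trivial bound $|q_t^*(i)-\bar q_t(i)|\le 1$). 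Your coupling/multilinearity argument replaces all of that with the exact Lipschitz estimate $|q_t^*(i)-\bar q_t(i)|\le\sum_j|a_j-\hat a_{tj}|$, i.e.\ sub-additivity of total variation over product measures together with $q_t^S(i)\in[0,1]$; this is more elementary, avoids the exponential approximations and the case split on $t$, and treats the dependence between $\p_t$ and $\hat\a_t$ in exactly the way the paper does (a deterministic stability inequality applied on the Bernstein event, both quantities being expectations of the same $\cH_{t-1}$-measurable function). What the paper's multiplicative route buys is that the per-coordinate variance factors $a_j(1-a_j)$ stay visible inside the expectation (Inequality~\eqref{eq:instancedependent_bound}), which the authors point to as a possible starting point for instance-dependent refinements; your telescoping bound can retain the same information simply by keeping Bernstein's $\sqrt{a_j(1-a_j)}$ term inside $\sum_j|a_j-\hat a_{tj}|$, so nothing essential is lost. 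Your treatment of $S=\emptyset$ (setting $q_t^\emptyset(i)=0$) and the constant absorption $\log(2K/\delta)\le 2\log(K/\delta)$ are fine (the latter needs $K/\delta\ge 2$, a negligible edge case, and can be avoided altogether by allotting confidence $\delta/K$ per coordinate as the paper does), so I see no gap.
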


Using the result of Lem.~\ref{lem:conc_barq_p}, the following theorem analyses the regret guarantee of \algicat \, (Alg.~\ref{alg:icat}).


\begin{restatable}[\algicat: Regret Analysis]{thm}{ubicat}
\label{thm:reg_icat}
Let $T \geq 1$. The sleeping regret incurred by \algicat\, (Alg.~\ref{alg:icat}) can be bounded as:
\begin{eqnarray*}
R_T & = & \max_{\pi: 2^{[K]} \mapsto [K]}\E\bigg[ \sum_{t=1}^{T}\ell(i_t) - \sum_{t=1}^{T}\ell(\pi(S_t)) \bigg] \\
& \le & 16 K^{2} \sqrt{ T \ln T}  + 1 \,,
\end{eqnarray*}
for the parameter choices $\eta  =  \sqrt{(\log K)/(KT)}$,  $\delta  =  K/T^2$, and 
\[
\lambda_t  = \min\left\{  2 K \sqrt{\frac{2\log(K/\delta)}{t}} + \frac{8K\log(K/\delta)}{3t},1\right\} \,.
\]
\end{restatable}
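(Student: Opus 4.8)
The plan is to combine (i) a reduction of the best‑policy benchmark to a ``per‑action'' comparator, (ii) a union bound restricting attention to the high‑probability event of Lemma~\ref{lem:conc_barq_p}, (iii) the standard exponential‑weights regret inequality applied to the estimated losses $\hat\bell_t$, and (iv) a careful accounting of the bias and the second moment of $\hat\bell_t$ using the closeness of $\bar q_t$ to $q^*_t$.

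\emph{Benchmark and good event.} Since the sets $S_t$ are i.i.d.\ and independent of the losses, for any fixed $\pi$ we have $\E[\sum_t\ell_t(\pi(S_t))]=\sum_t\sum_S P_{\a}(S)\ell_t(\pi(S))$, hence $\min_\pi\E[\sum_t\ell_t(\pi(S_t))]=\sum_S P_{\a}(S)\min_{i\in S}L_T(i)$ with $L_T(i):=\sum_t\ell_t(i)$. Setting $i^\star(S):=\argmin_{i\in S}L_T(i)$ and $\beta_i:=\P_{S\sim\a}(i^\star(S)=i)$, this equals $\langle\bsigma,L_T\rangle$ for a probability vector $\bsigma$ with, crucially, $\beta_i\le a_i$ for every $i$ (the comparator never ``overuses'' an action beyond its availability). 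Let $\cE$ be the event that \eqref{eq:barq_conc} holds for all $t\le T$; a union bound gives $\P(\cE^c)\le T\delta=K/T$, so the contribution of $\cE^{c}$ to $R_T$ is at most $K$ and is absorbed by the leading term. On $\cE$, by the choice of $\lambda_t$, one has $q^*_t(i)\le \bar q_t(i)+\lambda_t\le q^*_t(i)+2\lambda_t$ for all $t,i$; since $q^*_t(\cdot)$ is exactly the probability of playing each arm given the past, $\hat\bell_t$ is a slight under‑estimate of the ideal importance‑weighted estimator, so in conditional expectation given the history and $\cE$ we get $\E[\hat\ell_t(i)]\le\ell_t(i)$ and $\E[\sum_j\hat\ell_t(j)]=\E[\hat\ell_t(i_t)]\le\sum_j\ell_t(j)\le K$.

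\emph{Dominant term.} From the identity $\ell_t(i_t)=\hat\ell_t(i_t)\bigl(\bar q_t(i_t)+\lambda_t\bigr)$ and the upper bound $\bar q_t(i_t)+\lambda_t\le q^*_t(i_t)+2\lambda_t$ valid on $\cE$,
\[
\sum_t\ell_t(i_t)\ \le\ \sum_t q^*_t(i_t)\,\hat\ell_t(i_t)\ +\ 2\sum_t\lambda_t\,\hat\ell_t(i_t).
\]
Taking expectations, the second sum is at most $2\sum_t\lambda_t\,\E[\hat\ell_t(i_t)]\le 2K\sum_t\lambda_t$; with $\delta=K/T^2$ one has $\log(K/\delta)=2\ln T$, so $\sum_t\lambda_t\le 8K\sqrt{T\ln T}$ up to an additive $O(K\ln^2 T)$, which yields the announced $16K^2\sqrt{T\ln T}$.

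\emph{Learning term and main obstacle.} It remains to bound $\E[\sum_t q^*_t(i_t)\hat\ell_t(i_t)]$ by $\langle\bsigma,L_T\rangle$ plus lower‑order terms. Exponential weights on the non‑negative estimates gives, for the comparator $\bsigma$ and using $D_{\mathrm{KL}}(\bsigma\Vert p_1)\le\ln K$,
\[
\sum_t\langle p_t,\hat\bell_t\rangle-\sum_t\langle\bsigma,\hat\bell_t\rangle\ \le\ \frac{\ln K}{\eta}+\frac{\eta}{2}\sum_t\sum_i p_t(i)\,\hat\ell_t(i)^2 .
\]
Here one must convert $\langle p_t,\hat\bell_t\rangle$ (produced by the update) into $q^*_t(i_t)\hat\ell_t(i_t)$ using the renormalization identity $q_t^{S_t}(i)=p_t(i)/\sum_{j\in S_t}p_t(j)$ together with $\E_{S_t}[q_t^{S_t}(i)]=q^*_t(i)$ and the independence of $S_t$ from $p_t$; the comparator sum is handled by $\E[\langle\bsigma,\hat\bell_t\rangle]\le\langle\bsigma,\bell_t\rangle$ and $\beta_i\le a_i$; and the second moment is controlled on $\cE$ via $(\bar q_t(i)+\lambda_t)^2\ge q^*_t(i)\lambda_t$ and $\E_{S_t}[\1(i\in S_t)q_t^{S_t}(i)]=q^*_t(i)$, so that after summation it contributes at the same order as the other terms. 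Collecting everything with $\eta=\sqrt{\ln K/(KT)}$ and the stated $\delta,\lambda_t$ gives the claimed bound. The delicate point — and the place I expect the real work to be — is exactly this learning term: reconciling ``play $i_t\sim q_t^{S_t}$ but update $p_t$ by plain exponential weights'', and inside it bounding $\E[\sum_i p_t(i)\hat\ell_t(i)^2]$ without picking up a spurious $1/\min_i a_i$ or a factor growing with $t$. This is precisely where the joint estimate $\bar q_t(i)$ is essential: by Lemma~\ref{lem:conc_barq_p} together with the floor $\lambda_t$ it stays bounded away from $0$ even on rounds where arm $i$ is absent, whereas the separated estimate $\hat p_t(i)\hat a_{ti}$ of~\cite{neu14} collapses to $0$ there, which is the source of their extra $T^{1/3}$ factor.
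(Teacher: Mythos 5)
Your benchmark rewriting, bias control, and dominant-term accounting ($\E[\ell_t(i_t)]\le \E[\langle \q_t^*,\hat\bell_t\rangle]+2K\lambda_t+\dots$ on the good event, plus $\sum_t\lambda_t=O(K\sqrt{T\ln T})$) all match the paper's Lemmas~\ref{lem:first_term},~\ref{lem:second_term} and the final bookkeeping. But the step you yourself flag as the ``delicate point'' is a genuine gap, not just the hard part left implicit: the plain exponential-weights inequality you invoke controls $\sum_t\langle \p_t,\hat\bell_t\rangle-\sum_t\langle\bsigma,\hat\bell_t\rangle$, whereas what you must bound is $\E\big[\sum_t\langle \q_t^*,\hat\bell_t\rangle\big]$. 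These do not convert into one another. Conditionally on the history, $\E[\hat\ell_t(i)]=\ell_t(i)\,q_t^*(i)/(\bar q_t(i)+\lambda_t)\approx\ell_t(i)$ on the good event, so $\E[\langle \p_t,\hat\bell_t\rangle]\approx\langle \p_t,\bell_t\rangle$ while $\E[\langle \q_t^*,\hat\bell_t\rangle]\approx\langle \q_t^*,\bell_t\rangle$; the renormalization identity and $\E_{S_t}[q_t^{S_t}(i)]=q_t^*(i)$ produce $q_t^*$, never $\p_t$. The two can differ by $\Theta(1)$ per round: take $K=2$, $a_1=1$, $a_2=0.01$, $\ell_t(1)=1$, $\ell_t(2)=0$; then $p_t(2)\to1$ so $\langle \p_t,\bell_t\rangle\to0$, yet $\langle \q_t^*,\bell_t\rangle\to0.99$, and the full-support EW inequality with the fixed mixture comparator $\bsigma$ becomes vacuous ($\approx 0\le 0.99\,T$) while telling you nothing about the algorithm's actual loss. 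No second-moment or $\lambda_t$ term in your decomposition can absorb an $\Omega(T)$ discrepancy, so the ``learning term'' bound does not go through as sketched.

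The paper closes exactly this hole with a different structural observation: because $p_t(i)\propto e^{-\eta\sum_{\tau<t}\hat\ell_\tau(i)}$, the restricted distribution $q_t^S(i)=p_t(i)\1(i\in S)/\sum_{j\in S}p_t(j)$ is, for \emph{every fixed} $S$, precisely the exponential-weights iterate on $S$ for the loss sequence $\hat\bell_1,\dots,\hat\bell_T$. Hence the standard EXP3 bound can be applied per set $S$ with the set-dependent comparator $\pi^*(S)$, and only then averaged over an independent $S\sim P_{\a}$ (Inequality~\eqref{eq:RegretEXP31}); this keeps both the algorithm side ($\E_S[\langle \q_t^S,\hat\bell_t\rangle]=\langle \q_t^*,\hat\bell_t\rangle$) and the comparator side matched to the sleeping structure, with the variance handled by Lemma~\ref{lem:var_term}. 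If you want to salvage your route, you need either this commutation-with-restriction argument or some other mechanism that compares the played distribution $\q_t^*$ (not $\p_t$) to a set-dependent benchmark; a single EW bound over $[K]$ against the fixed mixture $\bsigma$ cannot do it. (A smaller remark: your global union bound over $t$ with ``regret $\le T$ on $\cE^c$'' is workable, but the paper instead handles the bad event round by round inside each lemma using $\hat\ell_t\le1/\lambda_t$, which avoids conditioning the whole trajectory on $\cE$.)
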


\begin{proof}\textbf{\hspace{-3pt}(sketch) }
Our proof is developed based on the standard regret guarantee of the EXP3 algorithm for the classical problem of multiarmed bandits with adversarial losses \cite{auer02nonstochastic,Auer02}. 
Precisely, consider any fixed set $S \subseteq [K]$, and suppose we run EXP3 algorithm on the set $S$, over any nonnegative sequence of losses $\hat \ell_1, \hat \ell_2, \ldots \hat \ell_T$ over items of set $S$, and consequently with weight updates $\q_1^S, \q_2^S, \ldots \q_T^S$ as per the EXP3 algorithm with learning rate $\eta$. Then from the standard regret analysis of the EXP3 algorithm \cite{PLG06}, we get that for any $i \in S$:
\begin{align*}
\sum_{t = 1}^{T}\big< \q_t^S,\hat \ell_t \big> - \sum_{t = 1}^T\hat \ell_t(i)  \le \frac{\log K}{\eta} + \eta \sum_{t = 1}^T \sum_{k \in S} q_t^S(k)\hat \ell_t(k)^2 \,.
\end{align*}

Let $\pi^*:S \mapsto [K]$ be any strategy. Then, applying the above regret bound to the choice $i = \pi^*(S)$ and taking the expectation over $S \sim P_{\a}$ and over the possible randomness of the estimated losses, we get
\begin{align}
\sum_{t = 1}^{T} \E\Big[\big< \q_t^S,& \hat \ell_t \big>\Big] - \sum_{t = 1}^T\E\Big[\hat \ell_t(\pi^*(S))\Big] \le \nonumber \\
& \frac{\log K}{\eta} + \eta \sum_{t = 1}^T \E\bigg[\sum_{k \in S} q_t^S(k) \hat \ell_t(k)^2 \bigg] \,.
\label{eq:RegretEXP31}
\end{align}

Now towards proving the actual regret bound of \algicat\, (recall the definition from Eqn. \eqref{eq:reg}), we first need to establish the following three main sub-results that relate the different expectations of Inequality~\eqref{eq:RegretEXP31} with quantities related to the actual regret (in Eqn. \eqref{eq:reg}). 

\begin{restatable}[]{lem}{firstterm}
	\label{lem:first_term}  
	Let $\delta \in (0,1)$. Let $t \in [T]$. Define $\q_t^S$ as in~\eqref{eq:qtS_def} and $\hat \ell_t$ as in~\eqref{eq:loss_estimate}. Assume that $i_t$ is drawn according to $\q_t^{S_t}$ as defined in Alg.~\ref{alg:icat}. Then,
	\[
	\E\big[\ell_t(i_t)\big] \le 
	\E\big[\big< \q_t^S,\hat \ell_t \big>\big] + 2 K \lambda_t + \frac{\delta}{\lambda_t} \,,
	\]
	for $\lambda_t = 2 K \sqrt{\frac{2\log(K/\delta)}{t}} + \frac{8K\log(K/\delta)}{3t}$. 
\end{restatable}

\begin{restatable}[]{lem}{secondterm}
	\label{lem:second_term}
	Let $\delta \in (0,1)$. Let $t \in [T]$. Define $\hat \ell_t$ as in~\eqref{eq:loss_estimate} and assume that $i_t$ is drawn according to $\q_t^{S_t}$ as defined in Alg.~\ref{alg:icat}. Then for any $i \in [K]$,
	\[
	\E\big[ \hat \ell_t(i)  \big] \le  \ell_t(i) + \frac{\delta}{\lambda_t} \,,
	\]
	for $\lambda_t = 2 K \sqrt{\frac{2\log(K/\delta)}{t}} + \frac{8K\log(K/\delta)}{3t}$.
\end{restatable}

\begin{restatable}[]{lem}{varterm}
	\label{lem:var_term}
	Let $\delta \in (0,1)$. Let $t \in [T]$. Define $\q_t^S$ as in~\eqref{eq:qtS_def} and $\hat \ell_t$ as in~\eqref{eq:loss_estimate}. Then,
	\[
	\E\Bigg[ \sum_{i \in S} q_t^S(i)\hat  \ell_t(i)^2\Bigg] \le K + \frac{\delta}{\lambda_t^2}.
	\]
	for $\lambda_t = 2 K \sqrt{\frac{2\log(K/\delta)}{t}} + \frac{8K\log(K/\delta)}{3t}$. 
\end{restatable}

With the above claims in place, we are now proceed to prove the main theorem: Let us denote the best policy $\pi^* := \arg\min_{\pi: 2^{[K]}\mapsto [K]}\sum_{t = 1}^{T}\E_{S_t \sim P_{\a}}[\ell(\pi(S_t))]$. Now, recalling from Eqn. \eqref{eq:reg}, the actual regret definition of our proposed algorithm, and combining the claims from Lem.~\ref{lem:first_term},~\ref{lem:second_term}, we first get:
\begin{align*}
& R_{T}(\text{\algicat}) =  \sum_{t=1}^{T}\E\Big[\ell_t(i_t) - \ell_t(\pi^*(S_t))\Big]\\
& \le  2K \sum_{t = 1}^{T} \lambda_t + 2 \sum_{t=1}^T \frac{\delta}{\lambda_t} + \sum_{t=1}^{T}\E\Big[\big< \q_t^S,\hat \ell_t \big>  -  \hat \ell_t(\pi^*(S))\Big] \,.
\end{align*}
Then, we can further upper-bound the last term on the right-hand-side using Inequality~\eqref{eq:RegretEXP31} and  Lem.~\ref{lem:var_term}, which yields
\begin{align}
R_{T}&(\text{\algicat}) \nonumber \\
& \leq 2K \sum_{t = 1}^{T} \lambda_t + 2 \sum_{t=1}^T \frac{\delta}{\lambda_t}  + \frac{\log K}{\eta} + \eta KT + \eta  \sum_{t=1}^T \frac{\delta}{\lambda_t^2} \nonumber \\
& \leq \frac{\log K}{\eta} + \eta KT  +  2K \sum_{t = 1}^{T} \lambda_t + 3 \sum_{t=1}^T \frac{\delta}{\lambda_t^2} \,,
\label{eq:sleeping_regret_bound1}
\end{align}
where in the last inequality we used that  $\eta \leq 1$ and $\lambda_t \leq 1$. Otherwise, we can always choose $\min\{1,\lambda_t\}$ instead of $\lambda_t$ in the algorithm and Lem.~\ref{lem:conc_barq_p} would still be satisfied.

The proof is concluded by replacing $\lambda_t  = 2 K \sqrt{\frac{2\log(K/\delta)}{t}} + \frac{8K\log(K/\delta)}{3t}$ and by bounding the two sums as follows:
\begin{align*}
\sum_{t=1}^T \lambda_t  \leq 2K \sqrt{2 \log \Big(\frac{K}{\delta}\Big) T} + \frac{8K}{3} \log \Big(\frac{K}{\delta}\Big) (1+\log T) 
\end{align*}
and using $\lambda_t \geq 2K\sqrt{2 \log(K/\delta)/t}$, we have
\begin{align*}
\sum_{t=1}^T \frac{1}{\lambda_t^2}
& \leq \frac{1}{8 K^2  \log (K/\delta)} \sum_{t=1}^T t \leq \frac{T^2}{8 K^2  \log (K/\delta)} \leq \frac{T^2}{8 K^2}.
\end{align*}
Then, using $\delta := K/T^2$, $\log(K/\delta) = 2 \log(T)$, we can further upper-bound:
$
\sum_{t=1}^T \lambda_t \leq 4K \sqrt{T\log T} + \frac{8K}{3} (1+\log T)(\log T) \leq 7K \sqrt{T\log T}, 
$
and
$
3 \sum_{t=1}^T \frac{\delta}{\lambda_t^2} \leq \frac{3}{8K} \leq 1. 
$
Thus, upper-bounding the two sums into~\eqref{eq:sleeping_regret_bound1}, we get
\[
R_{T}(\text{\algicat}) \leq \frac{\log K}{\eta} + \eta KT + 14 K^2 \sqrt{T \log T} + 1 \,.
\]
Optimizing $\eta = \sqrt{(\log K)/KT}$ and upper-bounding $\sqrt{KT \log K} \leq K^2 \sqrt{T}$, finally concludes the proof.
\end{proof}
\vspace{-10pt}
The above regret bound is of order $\tilde O(K^{2}\sqrt{T})$, which is optimal in $T$, unlike any previous work which could only achieve $\tilde O((KT)^{2/3})$ regret guarantees \cite{neu14} at best. Thus our regret guarantee is only suboptimal in terms of $K$, as the lower bound of this problem is known to be $\Omega(\sqrt {KT})$ \cite{kleinberg+10,kanade09}. However, it should be noted that in our experiments (see Figure~\ref{fig:reg_vs_K}), the dependence of our regret on the number of arms behaves similarly to other algorithms although their theoretical guarantees expect better dependencies on $K$. The sub-optimality could thus be an artifact of our analysis, but despite our efforts, we have not been able to improve it. We think this may come from our proof of Lem.~\ref{lem:conc_barq_p}, in which we see two gross inequalities that may cost us this dependence on $K$.  First, the proof upper-bounds $|q_t^{(S)}(i)| \leq 1$, while in average over $i$ and $S$ the latter is around $1/K$. Yet, dependence problems condemn us to use this worst-case upper-bound. Secondly, the proof uses uniform bounds of $|a_i - \hat a_{ti}|$ over $i=1,\dots,K$ when the estimation errors could offset each other.

Note also that the regret bound in the theorem is worst-case. An interested direction for future work would be to study whether it is possible to derive an instance-dependent bound, based on the $a_i$ instances. Typically, $K$ could be replaced by the expected number of active experts. A first step in this direction would be to start from Inequality~\eqref{eq:instancedependent_bound} in
the proof of Lem.~\ref{lem:conc_barq_p} and try to keep the dependence on the $a_i$ distribution along the proof. 

Finally, note that the algorithm only requires the beforehand knowledge of the horizon $T$ to tune its hyper-parameter. However, the latter assumption can be removed by using standard calibration techniques such as the doubling trick (see~\cite{PLG06}).


\subsection{Efficient \algicat: Improving Computational Complexity}
\label{subsec:alg_eff}
Thm.~\ref{thm:reg_icat} shows the optimality of \algicat \,(Alg.~\ref{alg:icat}), but its one limitation lies in computing the probability estimates 
\[
\bar q_t(i) := \sum_{S \in 2^{[K]}}P_{\hat \a}(S)q_{t}^{S}(i), \forall {i \in [K]},
\]
which requires $O(2^K)$ computational complexity per round. 

In this section we show how to get around with this problem just by approximating $\bar q_t(i)$ by an empirical estimate 
\begin{equation}
	\label{eq:tildeqt_def}
	\tilde q_t(i) := \frac{1}{t}\sum_{\tau = 1}^{t} q_t^{S^{(\tau)}_t}(i) \,,
\end{equation}
where $S_t^{(1)},S_t^{(2)},\ldots S_t^{(t)}$ are $t$ independent draws from the distribution $P_{\hat \a}$, i.e. $P_{\hat \a_t}(S) := \Pi_{i =1}^{K}\hat a_{ti}^{\1(i \in S)} (1-\hat a_{ti})^{1-\1(i \in S)}$ for any $S \subseteq [K]$ (recall the notation from Sec.~\ref{subsec:alg_inst}).
The above trick proves useful with the crucial observation that $q_t^{S^{(1)}_t}(i), q_t^{S^{(2)}_t}(i), \ldots q_t^{S^{(T)}_t}(i)$ are independent of each other (given the past) and that each $q_t^{S_t^{(\tau)}}(i)$ are unbiased estimated of $\bar q_t(i)$. That is, $\E_{\hat \a_t}[q_t^{S^{(\tau)}_t}(i)] = \bar q_t(i),\, \forall i \in [K], \tau \in [t]$. By classical concentration inequalities, this precisely leads to fast concentration of $\tilde q_t(i)$ to $\bar q_t(i)$ which in turn concentrates to $\q^*_t(i)$ (by Lem.~\ref{lem:conc_barq_p}). Combining these results, thus one can obtain the concentration of $\tilde q_t(i)$ to $\q^*_t(i)$ as shown in Lem.~\ref{lem:conc_barq_p_e}. 

\begin{restatable}[Concentration of $\tilde q_t(i)$]{lem}{concbarqe}
	\label{lem:conc_barq_p_e}
	Let $t \in [T]$ and $\delta \in (0,1)$. Let $q^*_t(i) = \E_{S \sim \a}\big[ q_{t}^{S}(i) \big]$ and $\tilde q_t$ as defined in Equation~\eqref{eq:tildeqt_def}. Then, with probability at least $1-\delta$, 
	\begin{equation*}
	|q^*_t(i) - \tilde q_t(i)| \le   4 K \sqrt{\frac{\log(2K/\delta)}{t}} + \frac{8K\log(2K/\delta)}{3t} \,,
	\end{equation*}
	for all $i \in [K]$.
\end{restatable}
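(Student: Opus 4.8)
The plan is to split the error by the triangle inequality into a ``bias'' term, already controlled by Lemma~\ref{lem:conc_barq_p}, and a ``sampling'' term coming from the fresh draws $S_t^{(1)},\dots,S_t^{(t)}$:
\[
|q^*_t(i) - \tilde q_t(i)| \;\le\; \big|q^*_t(i) - \bar q_t(i)\big| \;+\; \big|\bar q_t(i) - \tilde q_t(i)\big| \,.
\]
For the first term I would invoke Lemma~\ref{lem:conc_barq_p} with confidence level $\delta/2$ in place of $\delta$: with probability at least $1-\delta/2$, simultaneously for all $i\in[K]$,
\[
\big|q^*_t(i) - \bar q_t(i)\big| \le 2K\sqrt{\tfrac{2\log(2K/\delta)}{t}} + \tfrac{8K}{3t}\log(2K/\delta) \,.
\]

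For the second term, fix the $\sigma$-algebra $\cF_t$ generated by the observed sets $S_1,\dots,S_t$; note that $\hat\a_t$, the distribution $P_{\hat\a_t}$, and the weights $\p_t$ are all $\cF_t$-measurable, so that $\bar q_t(i) = \sum_{S} P_{\hat\a_t}(S)\,q_t^S(i)$ (by definition~\eqref{eq:barqt_def}) is a constant given $\cF_t$. Conditionally on $\cF_t$, the resampled sets $S_t^{(1)},\dots,S_t^{(t)}$ are i.i.d.\ draws from $P_{\hat\a_t}$, hence the summands $q_t^{S_t^{(\tau)}}(i)\in[0,1]$ are i.i.d.\ with conditional mean exactly $\bar q_t(i)$. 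Hoeffding's inequality for an average of $t$ i.i.d.\ $[0,1]$-valued random variables gives, for each fixed $i$, $\Pr\big(|\bar q_t(i)-\tilde q_t(i)|>\varepsilon \,\big|\, \cF_t\big) \le 2e^{-2t\varepsilon^2}$; choosing $\varepsilon=\sqrt{\log(4K/\delta)/(2t)}$ and taking a union bound over $i\in[K]$ shows that, with conditional probability at least $1-\delta/2$ (hence also unconditionally, after integrating over $\cF_t$), $|\bar q_t(i)-\tilde q_t(i)| \le \sqrt{\log(4K/\delta)/(2t)}$ for all $i$.

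Combining the two high-probability events by a union bound, with probability at least $1-\delta$ we obtain, for all $i\in[K]$,
\[
|q^*_t(i) - \tilde q_t(i)| \le 2K\sqrt{\tfrac{2\log(2K/\delta)}{t}} + \sqrt{\tfrac{\log(4K/\delta)}{2t}} + \tfrac{8K}{3t}\log(2K/\delta) \,,
\]
and it only remains to absorb constants: since $\delta<1\le K$ we have $4K/\delta \le (2K/\delta)^2$, so $\log(4K/\delta)\le 2\log(2K/\delta)$, and since $K\ge1$ the first two terms are at most $(2\sqrt2+1)K\sqrt{\log(2K/\delta)/t}\le 4K\sqrt{\log(2K/\delta)/t}$, which yields the claimed bound.

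The only genuinely delicate point is the conditioning argument in the second step: one must make sure that $\p_t$, $\hat\a_t$ and $P_{\hat\a_t}$ are all frozen (i.e.\ $\cF_t$-measurable) so that the fresh draws are truly i.i.d.\ from a fixed distribution with mean $\bar q_t(i)$; everything else is a routine application of Hoeffding's inequality and a union bound. One could alternatively use Bernstein's inequality for the sampling term to mirror the two-term shape of Lemma~\ref{lem:conc_barq_p}, but the $[0,1]$ range already gives the right $\sqrt{\cdot/t}$ rate with no factor of $K$, so Hoeffding suffices and the $K$-dependence is entirely inherited from the bias term.
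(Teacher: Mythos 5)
Your proposal is correct and follows essentially the same route as the paper: triangle inequality, Lemma~\ref{lem:conc_barq_p} at level $\delta/2$ for the bias term, conditional Hoeffding plus a union bound over $i$ for the resampling term, and absorption of the $O(\sqrt{1/t})$ sampling error into the $K$-dependent term using $K\ge 1$. Your handling of the two-sided Hoeffding constant (via $\log(4K/\delta)\le 2\log(2K/\delta)$) is in fact slightly more careful than the paper's, and the conditioning point you flag is exactly the reason the paper resamples rather than reusing $S_1,\dots,S_t$.
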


\begin{rem}
Note that for estimating $\tilde \q_t$, we can not use the observed sets $S_1, S_2\ldots, S_t$, instead of resampling $S_t^{(1)}, S_t^{(2)}\ldots, S_t^{(t)}$ again--this is because in that case the resulting numbers $q_t^{S_1}(i), q_t^{S_2}(i), \ldots q_t^{S_t}(i)$ would no longer be independent, and hence can not derive the concentration result of Lem.~\ref{lem:conc_barq_p_e} (see proof in Appendix~\ref{app:conc_icate} for details).
\end{rem}

Using the result of Lem.~\ref{lem:conc_barq_p_e}, we now derive the following theorem towards analyzing the regret guarantee of the  computationally efficient version of \algicat.

\begin{restatable}[\algicat \,(Computationally efficient version): Regret Analysis]{thm}{ubicate}
\label{thm:reg_icat_e}
Let $T\geq 1$. The sleeping regret incurred by the efficient approximation of \algicat\, (Alg.~\ref{alg:icat}) can be bounded as:
\begin{eqnarray*}
R_T 
	& \le &  20 K^2 \sqrt{T \log T} + 1 \,,
\end{eqnarray*}
for the parameter choices $\eta = \sqrt{\frac{\log K}{KT}}$, $\delta = 2K/T^2$ and  
$\lambda_t := 4 K \sqrt{{\log(2K/\delta)}/{t}} + {8K\log(2K/\delta)}/{3t}$.

Furthermore, the per-round time and space complexities of the algorithm are $O(tK)$ and $O(K)$ respectively. 
\end{restatable}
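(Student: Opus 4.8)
The plan is to reduce the statement to the already-established Theorem~\ref{thm:reg_icat}. The efficient variant runs \emph{exactly} Alg.~\ref{alg:icat}, the sole difference being that the $O(2^K)$-time exact estimate $\bar q_t(i)$ is replaced throughout by the Monte-Carlo estimate $\tilde q_t(i)$ of~\eqref{eq:tildeqt_def}, so that the loss estimate used is $\hat \ell_t(i) = \ell_t(i)\1(i=i_t)/(\tilde q_t(i)+\lambda_t)$. First I would observe that the proof of Theorem~\ref{thm:reg_icat} uses $\bar q_t$ only through three structural properties: (i) $0 \le \bar q_t(i) \le 1$ deterministically; (ii) given the history up to the point where $\bar q_t$ is formed, $\bar q_t$ is independent of the played action $i_t \sim \q_t^{S_t}$, so the conditional expectation of $\1(i=i_t)$ is $q_t^{S_t}(i)$ with $\E_{S_t}[q_t^{S_t}(i)] = q^*_t(i)$; and (iii) the high-probability bound $|q^*_t(i) - \bar q_t(i)| \le \lambda_t$ for all $i$, with $\lambda_t$ the right-hand side of the concentration inequality. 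All three transfer to $\tilde q_t$: (i) holds since $\tilde q_t(i)$ is an average of quantities $q_t^{S_t^{(\tau)}}(i) \in [0,1]$; (ii) holds since the resampled sets $S_t^{(1)},\dots,S_t^{(t)}$ are drawn from $P_{\hat \a_t}$ independently of $i_t$ given the history; and (iii) is precisely Lemma~\ref{lem:conc_barq_p_e} with $\lambda_t = 4K\sqrt{\log(2K/\delta)/t} + 8K\log(2K/\delta)/(3t)$.

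Consequently, I would re-state Lemmas~\ref{lem:first_term},~\ref{lem:second_term} and~\ref{lem:var_term} with $\tilde q_t$ in place of $\bar q_t$ and with this new $\lambda_t$; their proofs are word-for-word the same and yield the identical right-hand sides $\E[\langle \q_t^S, \hat \ell_t\rangle] + 2K\lambda_t + \delta/\lambda_t$, $\ell_t(i) + \delta/\lambda_t$ and $K + \delta/\lambda_t^2$. Plugging these into the EXP3 regret bound~\eqref{eq:RegretEXP31} (applied on each fixed support $S$ with $i = \pi^*(S)$, for $\pi^* := \arg\min_\pi \sum_t \E_{S_t \sim P_\a}[\ell_t(\pi(S_t))]$) and summing over $t$, exactly as in the proof of Theorem~\ref{thm:reg_icat}, gives
\[
R_T \;\le\; \frac{\log K}{\eta} + \eta K T + 2K \sum_{t=1}^T \lambda_t + 3 \sum_{t=1}^T \frac{\delta}{\lambda_t^2}\,,
\]
after using $\eta \le 1$ and $\lambda_t \le 1$ (enforced by replacing $\lambda_t$ with $\min\{1,\lambda_t\}$, Lemma~\ref{lem:conc_barq_p_e} still holding).

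The rest is the routine calibration: with $\delta = 2K/T^2$ one has $\log(2K/\delta) = 2\log T$, so $\sum_t \lambda_t = O(K\sqrt{T\log T})$ via $\sum_t t^{-1/2} \le 2\sqrt T$ and $\sum_t t^{-1} \le 1 + \log T$, while $\lambda_t \ge 4K\sqrt{\log(2K/\delta)/t}$ gives $\sum_t \lambda_t^{-2} \le T^2/(16K^2)$ and hence $3\sum_t \delta/\lambda_t^2 \le 3/(8K) \le 1$; optimizing $\eta = \sqrt{\log K/(KT)}$ and bounding $\sqrt{KT\log K} \le K^2\sqrt T$ then yields $R_T \le 20 K^2 \sqrt{T\log T} + 1$. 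For the complexity claim, at round $t$ one updates the $K$ running counts $\sum_{\tau \le t}\1(i \in S_\tau)$ and hence $\hat a_{ti}$ in $O(K)$ time, draws $t$ i.i.d.\ sets from $P_{\hat \a_t}$ (each $K$ Bernoulli draws, i.e.\ $O(K)$), and for each computes $q_t^{S_t^{(\tau)}}(i)$ for all $i$ with two $O(K)$ passes over $[K]$ (one to form $\sum_{j \in S_t^{(\tau)}} p_t(j)$, one to fill in the coordinates), adding it to the running average $\tilde q_t$; the weight update and loss estimate are $O(K)$. This totals $O(tK)$ time per round, and only $\p_t$, the $K$ counts, the accumulator for $\tilde q_t$ and the current sample need be stored, i.e.\ $O(K)$ space.

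The step I expect to be the genuine (if small) obstacle is checking property (ii) above --- that the extra resampling randomness does not corrupt the conditioning structure behind Lemmas~\ref{lem:first_term}--\ref{lem:var_term} --- since one must be sure the fresh sets $S_t^{(\tau)} \sim P_{\hat \a_t}$ are independent of $i_t$ given the history (and, in Lemma~\ref{lem:var_term}, that $\E_{S_t}[q_t^{S_t}(i)^2] \le q^*_t(i)$ still controls the second moment). Everything else is either a verbatim repetition of the proof of Theorem~\ref{thm:reg_icat} or an elementary computation; the one substantive new ingredient, the concentration of $\tilde q_t$ around $q^*_t$, has already been isolated as Lemma~\ref{lem:conc_barq_p_e}.
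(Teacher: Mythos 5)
Your proposal is correct and follows essentially the same route as the paper: the paper's own proof of Theorem~\ref{thm:reg_icat_e} simply reruns the argument of Theorem~\ref{thm:reg_icat} with Lemma~\ref{lem:conc_barq_p_e} substituted for Lemma~\ref{lem:conc_barq_p} (and the corresponding $\lambda_t$), then accounts for the $O(tK)$ time and $O(K)$ space exactly as you do. Your explicit check that Lemmas~\ref{lem:first_term}--\ref{lem:var_term} survive the extra resampling randomness is in fact more careful than the paper's one-line reduction.
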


\begin{proof} \textbf{(sketch)}
The regret bound can be proved using similar steps as described for Thm.~\ref{thm:reg_icat}, except now we replace the concentration result of Lem.~\ref{lem:conc_barq_p_e} in place of Lem.~\ref{lem:conc_barq_p}.

\textbf{Computational complexity}: At any round $t\geq 1$, the algorithm requires only an $O(K)$ cost to update $\hat a_{ti}$, $\hat \ell_t(i)$ and $p_{t+1}(i),\, \forall i \in [K]$. Resampling $t$ subsets $S_t^{(\tau)}$ and computing $\{q_t^{S_t^{(\tau)}}(i)\}_{i \in [K]}$ requires another $O(tK)$ cost, resulting in the claimed computational complexity.

\textbf{Spatial complexity:} We only need to keep track of $\hat \a_t \in [0,1]^K$ and $\p_t \in [0,1]^K$ making the total storage complexity just $O(K)$ (noting $\tilde \q_t$ can be computed sequentially).
\end{proof}


\section{Proposed algorithm: General Availabilities}
\label{sec:algo_dep}

\textbf{Setting. } In this section we assume \emph{general subset availabilities} (see Sec.~\ref{prelims}).

\vspace*{-9pt}
\begin{center}
\begin{algorithm}[H]
   \caption{\textbf{\algicatg} }
   \label{alg:icatg}
\begin{algorithmic}[1]
   \STATE {\bfseries Input:} 
   \STATE ~~~ Learning rate $\eta > 0$, scale parameter $\lambda_t$
   \STATE ~~~ Confidence parameter: $\delta >0$
   \STATE {\bfseries Initialize:} 
   \STATE ~~~ Initial probability distribution $\p_1(i) = \frac{1}{K}, ~\forall i \in [K]$
   \WHILE {$t = 1, 2, \ldots$}
   \STATE Receive $S_t$
   \STATE Compute $q_t(i) = \frac{p_t(i)\1(i \in S_t)}{\sum_{j \in S_t}p_t(j)},\, \forall i \in [K]$
   \STATE Sample $i_t \sim \q_t$
   \STATE Receive loss $\ell_t(i_t)$
   \STATE Compute: $\bar q_t (i): = \frac{1}{t} \sum_{\tau = 1}^{t} q_t^{S_\tau}(i)$
   \STATE Estimate loss bound $\hat \ell_t(i) = \frac{\ell_t(i)\1(i = i_t)}{\bar q_t(i) + \lambda_t}$
   \STATE Update $p_{t+1}(i) = \frac{p_{t}(i)e^{-\eta \hat \ell_t(i)}}{\sum_{j = 1}^{K}p_{t}(i)e^{-\eta \hat \ell_t(j)}},\, \forall i \in [K]$
   \ENDWHILE
\end{algorithmic}
\end{algorithm}
\vspace{-2pt}
\end{center}
\vspace{-19pt}

\subsection{Proposed Algorithm: \algicatg}
\label{subsec:algicat_g}

\textbf{Main idea.} By and large, we use the same EXP3 based algorithm as proposed for the case of \emph{independent availabilities}, the only difference lies in using a different empirical estimate 
\begin{equation}
   \bar q_t (i) := \frac{1}{t} \sum_{\tau = 1}^{t} q_t^{S_\tau}(i) \,.
   \label{eq:barqt_general_def}
\end{equation} 
In hindsight, the above estimate $\bar q_t (i)$ is equal to the expectation $\E_{S \sim \hat P_t}[q_t^{S}]$, i.e., $\bar q_t (i) = \sum_{S \in 2^{[K]}} \hat P(S)q_t^{S}(i)$, where $\hat P_t(S): = \frac{1}{t}\sum_{\tau = 1}^t\1(S_\tau = S)$ is the empirical probability of set $S$ at time $t$.
The rest of the algorithm proceeds the same as Alg.~\ref{alg:icat}, the complete description is given in Alg.~\ref{alg:icatg}.

\subsection{Regret Analysis}
\label{subsec:reg_bnd_g}
We first analyze the concentration of $\bar q_t(i)$--the empirical probability of playing item $i$ at any round $t$, and the result goes as follows:

\begin{restatable}[Concentration of $\bar q_t(i)$]{lem}{concbarqg}
\label{lem:conc_barq_g}
Let $t \in [T]$. Let $q^*_t(i) = \E_{S \sim \a}\big[ q_{t}^{S}(i) \big]$, and define $\bar q_t(i)$ as in Equation~\eqref{eq:barqt_general_def}. Then, with probability at least $(1-\delta)$, 
\[
|q^*_t(i) - \bar q_t(i)| \le \sqrt{\frac{2^{K+1}}{t} \ln \frac{2^K}{\delta}} + \frac{2^{K+1}}{3t} \ln \frac{2^K}{\delta} \,,
\]
for all $i \in [K]$.
\end{restatable}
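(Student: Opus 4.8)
The plan is to reduce the statement to a concentration bound on the empirical distribution of the i.i.d.\ available sets $S_1,\dots,S_t$ over the (finite but exponentially large) domain $2^{[K]}$, and then to handle that by a Bernstein tail bound together with a union bound over the $2^K$ subsets of $[K]$.

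First, I would rewrite both quantities as linear functionals of distributions over $2^{[K]}$. Writing $\hat P_t(S) := \frac1t\sum_{\tau=1}^t \1(S_\tau = S)$ for the empirical distribution of the sets, we have $\bar q_t(i) = \sum_{S\subseteq[K]} \hat P_t(S)\, q_t^S(i)$ and $q^*_t(i) = \sum_{S\subseteq[K]} P(S)\, q_t^S(i)$, hence
\[
q^*_t(i) - \bar q_t(i) = \sum_{S\subseteq[K]} \big(P(S) - \hat P_t(S)\big)\, q_t^S(i) \,.
\]
Since $q_t^S(i) \in [0,1]$ and $\sum_{S}(\hat P_t(S) - P(S)) = 0$, discarding the nonpositive contributions gives $q^*_t(i) - \bar q_t(i) \le \sum_S (\hat P_t(S) - P(S))^+$ and, symmetrically, $\bar q_t(i) - q^*_t(i) \le \sum_S (\hat P_t(S) - P(S))^+$. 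Thus $|q^*_t(i) - \bar q_t(i)| \le \sum_{S\subseteq[K]} (\hat P_t(S) - P(S))^+$ holds pointwise (on every sample path) and, crucially, \emph{uniformly in $i$}. The point of this reduction is that the right-hand side no longer involves the data-dependent weights $\p_t$ (we used only $0 \le q_t^S(i) \le 1$), so we never need any independence between $\p_t$ and the $S_\tau$'s: it is purely a statement about the empirical measure of an i.i.d.\ sample, and a single high-probability event will cover all $i \in [K]$.

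Next I would control $\sum_S (\hat P_t(S) - P(S))^+$. For a fixed $S$, $t\,\hat P_t(S) = \sum_{\tau=1}^t \1(S_\tau = S)$ is a sum of i.i.d.\ Bernoulli$(P(S))$ variables with variance $P(S)(1-P(S)) \le P(S)$, so Bernstein's inequality gives, for each $\delta'\in(0,1)$, $\hat P_t(S) - P(S) \le \sqrt{2P(S)\ln(1/\delta')/t} + 2\ln(1/\delta')/(3t)$ with probability at least $1-\delta'$. Taking $\delta' = \delta/2^K$ and a union bound over the $2^K$ subsets, with probability at least $1-\delta$ we get, simultaneously for all $S\subseteq[K]$,
\[
(\hat P_t(S) - P(S))^+ \le \sqrt{\tfrac{2P(S)}{t}\ln\tfrac{2^K}{\delta}} + \tfrac{2}{3t}\ln\tfrac{2^K}{\delta} \,.
\]
Summing over $S$, using Cauchy--Schwarz $\sum_S \sqrt{P(S)} \le \sqrt{2^K \sum_S P(S)} = \sqrt{2^K}$ for the first term and $\sum_S 1 = 2^K$ for the second, yields
\[
\sum_{S\subseteq[K]} (\hat P_t(S) - P(S))^+ \le \sqrt{\tfrac{2^{K+1}}{t}\ln\tfrac{2^K}{\delta}} + \tfrac{2^{K+1}}{3t}\ln\tfrac{2^K}{\delta} \,,
\]
which, combined with the pointwise bound of the previous paragraph, is exactly the claimed inequality.

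This argument is essentially routine once the first-paragraph reduction is in place; the only genuine "obstacle" is structural rather than technical. With no assumption on $\cP$, concentration of the empirical distribution over a domain of size $2^K$ unavoidably costs a $\sqrt{2^K}$ factor — the Cauchy--Schwarz step is tight up to constants for, e.g., near-uniform $\cP$ — and this is precisely what later propagates into the $O(\sqrt{2^K T})$ regret of \algicatg. One should also keep the deviation one-sided (via the zero-sum identity $\sum_S(\hat P_t(S)-P(S))=0$) so that the union bound contributes $\ln(2^K/\delta)$ rather than $\ln(2^{K+1}/\delta)$, which is what pins down the stated constants.
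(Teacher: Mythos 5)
Your proof is correct and follows essentially the same route as the paper's: a Bernstein bound plus a union bound over all $2^K$ subsets, then summing the per-set deviations so that $\sum_{S}\sqrt{P(S)}\le\sqrt{2^K}$ produces the $\sqrt{2^{K+1}\ln(2^K/\delta)/t}$ and $2^{K+1}\ln(2^K/\delta)/(3t)$ terms. The only (harmless) difference is that you discard the weights $q_t^S(i)$ up front via the positive-part/total-variation reduction, whereas the paper keeps them, uses $q_t^S(i)\le 1$ and $Q_t(i)=\sum_S q_t^S(i)\le 2^K$, and applies Jensen's inequality to reach the same bound.
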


Using Lem.~\ref{lem:conc_barq_g} we now analyze the regret bound of Alg.~\ref{alg:icatg}.

\begin{restatable}[\algicatg: Regret Analysis]{thm}{ubicatg}
\label{thm:reg_icat_g}
Let $T \geq 1$. Suppose we set $\eta = \sqrt{(\log K)/(KT)}$, $\delta = 2^K/T^2$, and $\lambda_t =  \sqrt{(2^{K+1}/{t}) \ln ({2^K}/{\delta})} + 2^{K+1} \ln ({2^K}/{\delta})/(3t)$. Then, the regret incurred by \algicatg\, (Alg.~\ref{alg:icatg}) can be bounded as:
\[
 R_T \le K \sqrt{2^{K+4} T \log T} + K2^{K+3}(\log T)^2 \,.
\]

Furthermore, the per-round space and time complexities of the algorithm are $O(tK)$. 
\end{restatable}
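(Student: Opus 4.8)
The plan is to rerun the proof of Theorem~\ref{thm:reg_icat} almost verbatim, with Lemma~\ref{lem:conc_barq_g} playing the role of Lemma~\ref{lem:conc_barq_p}; the only structural change is that the scale parameter $\lambda_t$ now has to absorb the larger concentration radius $\sqrt{(2^{K+1}/t)\ln(2^K/\delta)}+(2^{K+1}/3t)\ln(2^K/\delta)$, which is exactly where the $2^K$ surfaces in the final bound. Concretely, I would first fix an arbitrary available set $S\subseteq[K]$ and note that, conditioned on $S$ being presented, Algorithm~\ref{alg:icatg} performs one EXP3 update on the arms of $S$ with weights $\q_t^{S}$, step size $\eta$, and the nonnegative estimates $\hat\ell_t$ of~\eqref{eq:loss_estimate}. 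The textbook EXP3 inequality then yields, for every $i\in S$,
\[
\sum_{t=1}^{T}\langle\q_t^{S},\hat\ell_t\rangle-\sum_{t=1}^{T}\hat\ell_t(i)\le\frac{\log K}{\eta}+\eta\sum_{t=1}^{T}\sum_{k\in S}q_t^{S}(k)\hat\ell_t(k)^2\,,
\]
and, evaluating at $i=\pi^*(S)$ for $\pi^*:=\argmin_{\pi}\sum_t\E_{S_t\sim\cP}[\ell_t(\pi(S_t))]$ and taking expectation over $S\sim\cP$ and the algorithm's randomness, one gets the exact analogue of~\eqref{eq:RegretEXP31}.

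The second step is to establish three bias/variance lemmas mirroring Lemmas~\ref{lem:first_term},~\ref{lem:second_term} and~\ref{lem:var_term}, but instantiated with the new $\lambda_t$ from the theorem statement (truncated at $1$ if needed, as in Algorithm~\ref{alg:icat}). In all three the engine is the same: on the event $\{|q_t^*(i)-\bar q_t(i)|\le\lambda_t\ \forall i\}$, which holds with probability $\ge 1-\delta$ by Lemma~\ref{lem:conc_barq_g}, we have $\bar q_t(i)+\lambda_t\ge q_t^*(i)$ for all $i$, so $\hat\ell_t$ is ``almost unbiased from above'' ($\E[\hat\ell_t(i)]\le\ell_t(i)+\delta/\lambda_t$), the missing-mass bias in the first term is at most $2K\lambda_t+\delta/\lambda_t$, and --- using that $q_t^{S}(i)=q_t^S(i)\1(i\in S)$ so that $\E_{S\sim\cP}[q_t^S(i)]=q_t^*(i)$ --- the variance budget is still $\E[\sum_{i\in S}q_t^S(i)\hat\ell_t(i)^2]\le K+\delta/\lambda_t^2$; off the event each $\hat\ell_t(i)\le 1/\lambda_t$, producing the $\delta/\lambda_t$ and $\delta/\lambda_t^2$ remainders. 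One subtlety to flag explicitly: $\p_t$ (hence $\q_t^S$) and the empirical set-distribution $\hat P_t$ are both measurable with respect to $S_1,\dots,S_{t-1}$, so they are correlated; this is harmless because the deterministic bound $|q_t^*(i)-\bar q_t(i)|\le\|\hat P_t-\cP\|_1$ holds for every realization of $\p_t$, so the concentration event of Lemma~\ref{lem:conc_barq_g} depends only on the i.i.d.\ draws $S_1,\dots,S_t$ and may be conditioned on freely.

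Plugging the three lemmas into the EXP3 inequality and summing over $t$ exactly as in Theorem~\ref{thm:reg_icat} gives
\[
R_T\le\frac{\log K}{\eta}+\eta KT+2K\sum_{t=1}^{T}\lambda_t+3\sum_{t=1}^{T}\frac{\delta}{\lambda_t^2}\,,
\]
and it only remains to plug in $\delta=2^K/T^2$ (so $\ln(2^K/\delta)=2\log T$) and the prescribed $\lambda_t$, bound $\sum_t t^{-1/2}\le2\sqrt T$ and $\sum_t t^{-1}\le1+\log T$ for the first sum, use $\lambda_t^2\ge(2^{K+1}/t)\ln(2^K/\delta)$ so that $\sum_t\delta/\lambda_t^2\le\delta T^2/(2^{K+1}\cdot2\log T)\le1$, and optimize $\eta=\sqrt{(\log K)/(KT)}$ (the resulting $2\sqrt{KT\log K}$ being dominated by the $2^K$ terms). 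The leading contribution is $\Theta\big(K\sqrt{2^K T\log T}\big)$ and the lower-order one is $\Theta\big(K2^K(\log T)^2\big)$, which after loosening constants matches the stated $K\sqrt{2^{K+4}T\log T}+K2^{K+3}(\log T)^2$. The complexity claim is then immediate: forming $\bar q_t(i)=\frac1t\sum_{\tau\le t}q_t^{S_\tau}(i)$ for all $i$ costs $O(tK)$ time (each $q_t^{S_\tau}$ is $O(K)$ and there are $t$ of them), while all other per-round updates ($\hat\ell_t$, $\p_{t+1}$) cost $O(K)$; the $O(tK)$ space is because, unlike the independent-availability algorithm which only stores $\hat\a_t$, here the past sets $S_1,\dots,S_t$ must be retained since $\bar q_t$ reweights them with the current $\p_t$.

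The only genuinely nonroutine ingredient is the concentration statement Lemma~\ref{lem:conc_barq_g}, which the theorem takes for granted: its proof has to union-bound over all (at most) $2^K$ candidate sets $S$, Bernstein-concentrate each empirical mass $\hat P_t(S)$ around $P(S)$, and recombine via $|q_t^*(i)-\bar q_t(i)|\le\sum_S|\hat P_t(S)-P(S)|$ together with Cauchy--Schwarz ($\sum_S\sqrt{P(S)}\le\sqrt{2^K}$). This is precisely where the exponential factor is incurred, and without any structural assumption on $\cP$ it cannot be removed by this route --- everything downstream is a mechanical transcription of the independent-availabilities argument with the larger variance scale.
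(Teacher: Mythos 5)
Your proposal is correct and follows essentially the same route as the paper: you re-run the Theorem~\ref{thm:reg_icat} argument (EXP3 inequality plus the three bias/variance lemmas) with the enlarged $\lambda_t$ dictated by Lemma~\ref{lem:conc_barq_g}, arrive at the same intermediate bound $\frac{\log K}{\eta}+\eta KT+2K\sum_t\lambda_t+3\sum_t\delta/\lambda_t^2$, and conclude with the same parameter choices and sum estimates, including the $O(tK)$ complexity claim. The only differences are cosmetic (e.g.\ your $\ell_1$/Cauchy--Schwarz phrasing of the concentration step versus the paper's Jensen argument inside Lemma~\ref{lem:conc_barq_g}), so there is nothing to correct.
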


\begin{proof}\textbf{\hspace{-3pt}(sketch) }
The proof proceeds almost similar to the proof of Thm.~\ref{thm:reg_icat} except now the corresponding version of the main lemmas, aka. Lem.~\ref{lem:first_term},\ref{lem:second_term}, and~\ref{lem:var_term} are satisfied but for 
$
\lambda_t = \sqrt{\frac{2^{K+1}}{t} \ln \frac{2^K}{\delta}}+ \frac{2^{K+1}}{3t} \ln \frac{2^K}{\delta} \,,
$
since here we need to use the concentration Lem.~\ref{lem:conc_barq_g} instead of Lem.~\ref{lem:conc_barq_p}. 

Similar to the proof of Thm.~\ref{thm:reg_icat} and following the same notation, we first combine claims from Lem.~\ref{lem:first_term},~\ref{lem:second_term}  to get:
\begin{align*}
&R_{T}(\text{\algicatg}) =  \sum_{t=1}^{T}\E\Big[\ell_t(i_t) - \ell_t(\pi^*(S_t))\Big]\\
& \leq \sum_{t=1}^{T}\E\big[\big< \q_t^S,\hat \ell_t \big> + 2K \lambda_t + \frac{2\delta}{\lambda_t} \big] - \E[\hat \ell_t(\pi^*(S_t))] \\
& \overset{(a)}{\leq}  \frac{\log K}{\eta} +  \eta \sum_{t = 1}^T \E\bigg[\sum_{k \in S} q_t^S(k)\hat \ell_t(k)^2 \bigg]  \\
&    \hspace*{4cm} +  2\sum_{t = 1}^{T} \big(K\lambda_t + \frac{\delta}{\lambda_t}\big)\\
& \overset{(b)}{\leq} \frac{\log K}{\eta} +  \eta KT +  \sum_{t = 1}^{T} \big(2 K\lambda_t + \frac{2\delta}{\lambda_t} + \frac{\eta \delta}{\lambda_t^2} \big)\\
& \leq  \frac{\log K}{\eta} +  \eta KT +  \sum_{t = 1}^{T} \big(2 K\lambda_t + \frac{3 \delta}{\lambda_t^2} \big), 
\end{align*}
where Inequality (a) and (b) respectively follow from \eqref{eq:RegretEXP31} and Lem.~\ref{lem:var_term}. The last inequality holds because $\eta \leq 1$ and $\lambda_t \leq 1$.
To conclude the proof, it now only remains to compute the sums and to choose the parameters $\delta = 2^K / T^2$ and $\eta = \sqrt{(\log K)/KT}$. 
Using $\lambda_t \geq \sqrt{2^{K+1} /t}$, we have
$
\sum_{t=1}^T \frac{\delta}{\lambda_t^2} \leq  \frac{\delta T^2}{2^{K+1}} \leq 1,
$ 
and since $\log(2^K/\delta) = 2 \log T$, we further have
\[
\lambda_t = \sqrt{\frac{2^{K+1}}{t} \ln T }+ \frac{2^{K+1}}{3t} \ln T
\]
which entails:
\begin{align*}
\sum_{t=1}^T \lambda_t & \leq \sqrt{2^{K+1} T \log T} + \frac{2^{K+1}}{3} (\log T)(1+ \log T) \\
& \leq \sqrt{2^{K+1} T \log T} + 2^{K+2} (\log T)^2.
\end{align*}

Finally, substituting $\eta$ and the above bounds in the regret upper-bound yields the desired result.

\textbf{Complexity analysis.} The only difference with Alg.~\ref{alg:icat} lies in computing $\bar q_t(i)$. Following a similar argument given for proving the computational complexity of Thm.~\ref{thm:reg_icat_e}, this can also be performed with a computational cost of $O(tK)$. Yet, now the algorithm specifically needs to keep in memory the empirical distribution of $S_1,\dots,S_t$ and thus a space complexity of $O(K + \min\{tK,2^K\})$ is required. 
\end{proof}

Our regret bound in Thm.~\ref{thm:reg_icat_g} has the optimal $\sqrt{T}$ dependency---to the best of our knowledge, \algicatg \,(Alg~\ref{alg:icatg}) is the first \emph{computationally efficient} algorithm to achieve  $O(\sqrt T)$ guarantee for the problem of \emph{Sleeping-Bandits}. Of course the EXP4 algorithm is known to attain the optimal $O(\sqrt {KT})$ regret bound, however it is computationally infeasible \cite{kleinberg+10} due to the overhead of maintaining a combinatorial policy class. 

Yet, on the downside, it is worth pointing out that the regret bound of Thm.~\ref{thm:reg_icat_g} only provides a sublinear regret in the regime $2^K \leq O(T)$, in which case algorithms such as EXP4 can be efficiently implemented. However, we still believe our algorithm to be an interesting contribution because it completes another side of the computational-performance trade-off. It is possible to move the exponential dependence on the number of experts from the computational complexity to the regret bound. 

Another argument in favor of this algorithm is that it provides an efficient alternative algorithm to EXP4 with regret guarantees in the regime $2^K \leq O(T)$. In the other regime, though we could not prove any meaningful regret guarantee, Alg.~\ref{subsec:algicat_g} performs very well in practice as shown by our experiments. We believe the $2^K$ constant in the regret to be an artifact of our analysis. However, removing it seems to be highly challenging due to dependencies between $q_t^{S}$ and $S_1,\dots,S_t$. An analysis of the concentration of $\bar q_t$ (defined in~\eqref{eq:barqt_general_def}) to $q_t^*$ without exponential dependence on $K$ proved to be particularly complicated. We leave this question for future research.



\begin{figure*}[t]
	\begin{center}
		\includegraphics[width=0.33\textwidth]{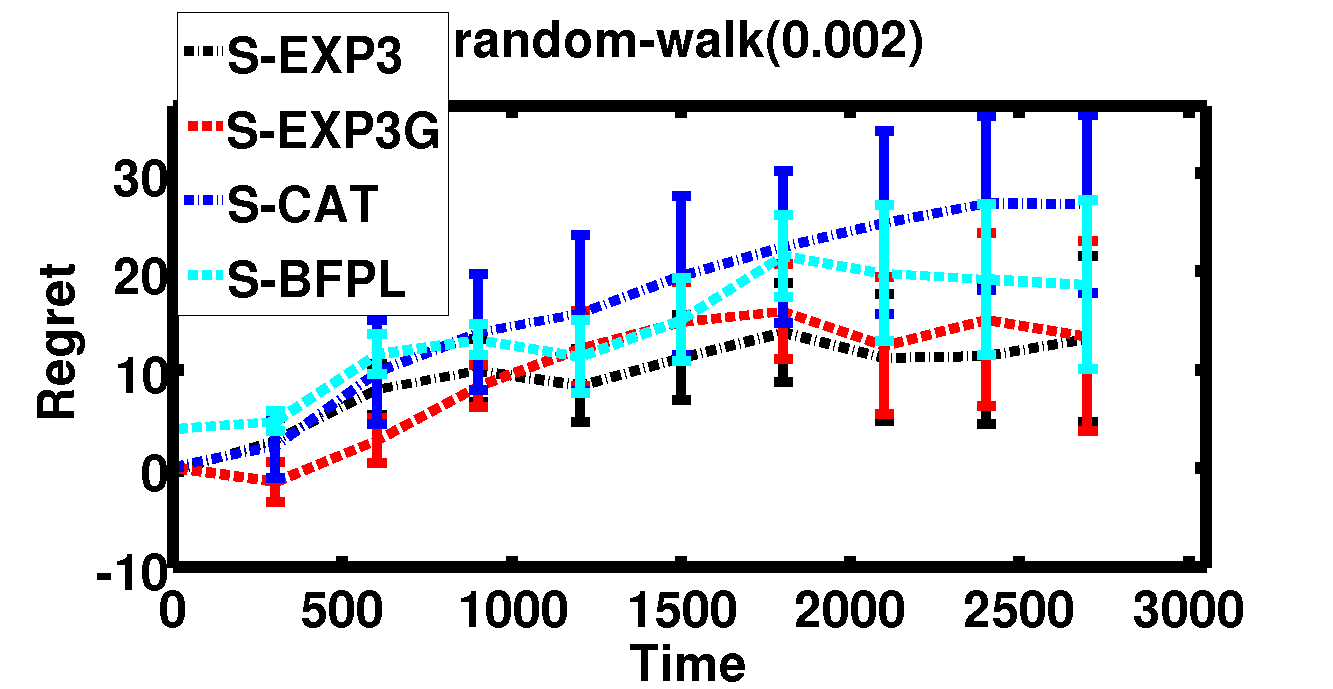}
		\includegraphics[width=0.33\textwidth]{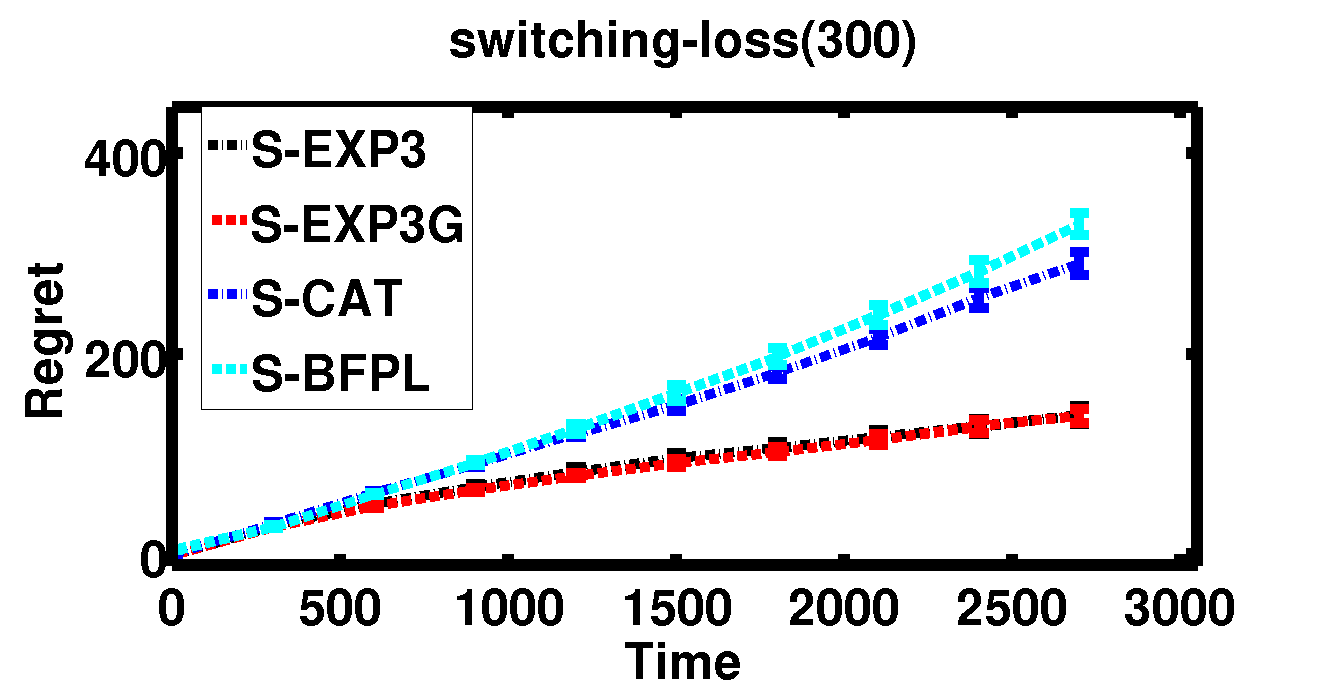}
		\includegraphics[width=0.33\textwidth]{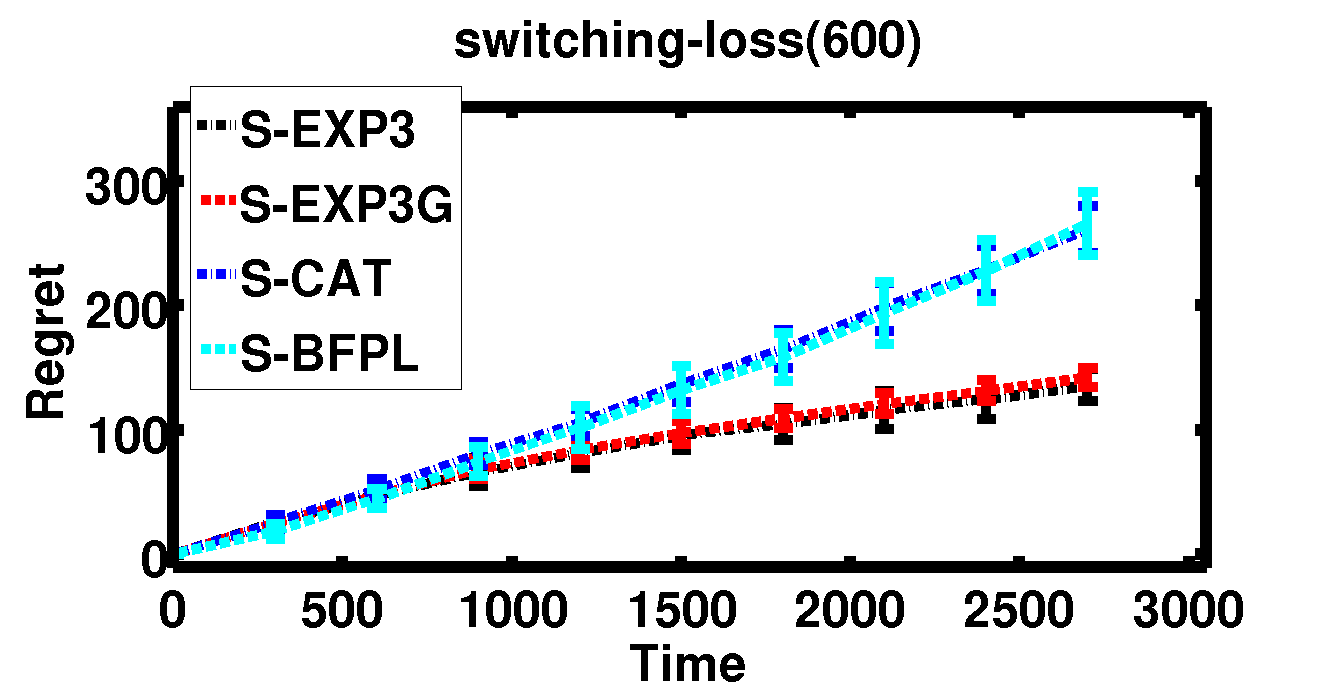}
		\vspace{-20pt}
		\caption{Regret vs Time: Independent availabilities}
		\label{fig:reg_indep}
	\end{center}
\end{figure*}

\section{Experiments}
\label{sec:expts}
In this section we present the empirical evaluation of our proposed algorithms (Sec.~\ref{sec:algo_indep} and~\ref{sec:algo_dep}) comparing their performances with the two existing \textit{sleeping bandit} algorithms that apply to our problem setting, i.e. for adversarial losses and stochastic availabilities. Thus we report the comparative performances of the following algorithms: 

\begin{enumerate}[nosep]
	\item \algicat: Our proposed Alg.~\ref{alg:icat} (the efficient version as described in Sec.~\ref{subsec:alg_eff}).  
	
	\item \algicatg: Our proposed Alg.~\ref{alg:icatg}. 
	 
	\item \algscat: The algorithm proposed by \cite{neu14} (precisely their Algorithm for semi-bandit feedback in Sec. $4.3$).
	
	\item \algkand: The algorithm proposed by \cite{kanade09} (see Fig. 3, BSFPL algorithm, Sec. $2$).
\end{enumerate}

\textbf{Performance Measures.} In all cases, we report the cumulative regret of the algorithms for $T = 5000$ time steps, each averaged over $50$ runs.
In the following subsections, we analyze our experimental evaluations for both independent and general (non-independent) availabilities.

\subsection{Independent Availabilities}
\label{subsec:expts_indep}

In this case the item availabilities are assumed to be independent at each round (description in  Sec.~\ref{sec:prob}).

\textbf{Environments.} 
We consider $K=20$ and generate the probabilities of item availabilities $\{a_{i}\}_{i \in [K]}$ independently and uniformly at random from the interval $[0.3,0.9]$.
We use the following loss generation techniques: 
$(1)$ \emph{Switching loss or SL($\tau$)}. We generate the loss sequence such that the best performing expert changes after every $\tau$ length epochs. 
$(2)$ \emph{Markov loss or ML(p)}. Similar to the setting used in \cite{neu14}, losses for each arm are constructed as random walks with Gaussian
increments of standard deviation $p$, initialized uniformly on $[0, 1]$ such that losses outside $[0, 1]$ are truncated. The explicit values used for $\tau$ and $p$ are specified in the corresponding figures.
The algorithm parameters $\eta, \lambda_t, \delta$ are set as defined in Thm7. 


\textbf{Remarks.} From Fig.~\ref{fig:reg_indep} it clearly shows that regret bounds of our proposed algorithm \algicat \, and \algicatg\, outperform the other two due to their orderwise optimal $O(\sqrt T)$ regret performance (see Thm.~\ref{thm:reg_icat} and~\ref{thm:reg_icat_g}). In particular, \algkand\, performs the worst due to its initial $O(T^{4/5})$ exploration rounds and uniform exploration phases thereafter. \algscat\, gives a much competitive regret bound compared to \algkand\, however, still lags behind due to the $O(T^{2/3})$ regret guarantee (see Sec. $3.1$ for a detailed explanation).

\begin{figure}[!h]
	\begin{center}
		\includegraphics[width=0.40\textwidth]{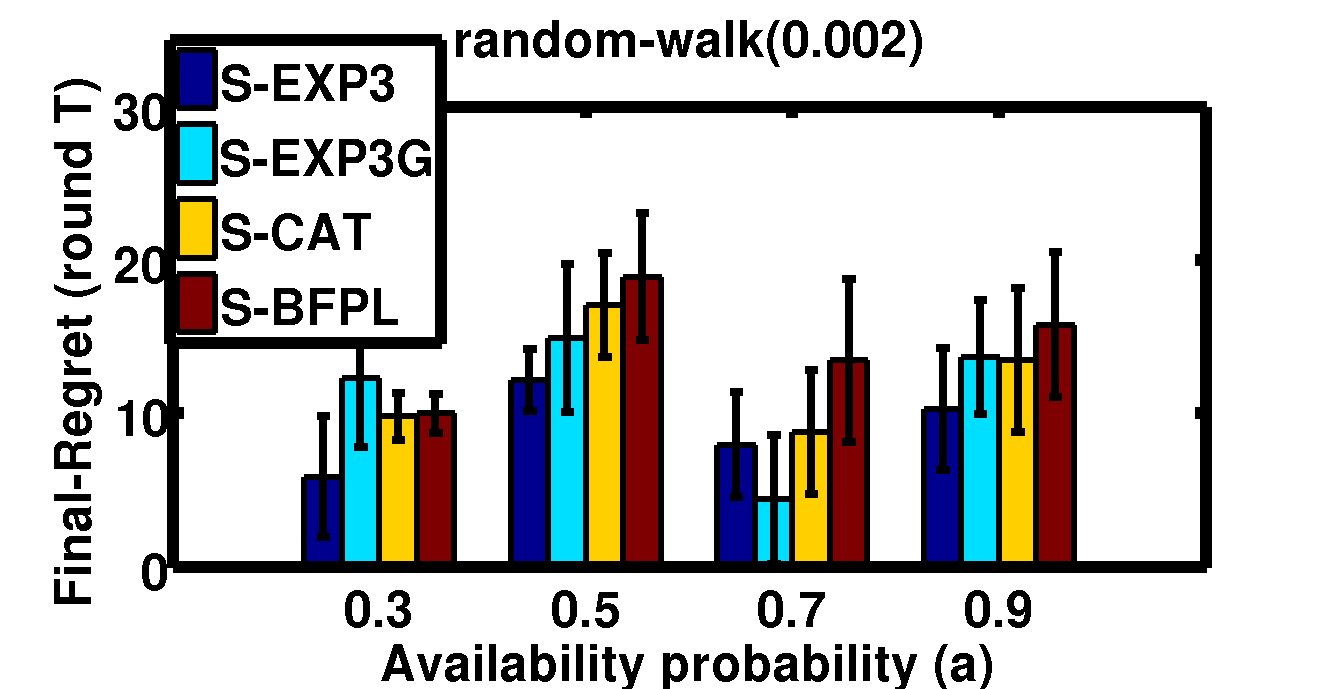}
		\includegraphics[width=0.40\textwidth]{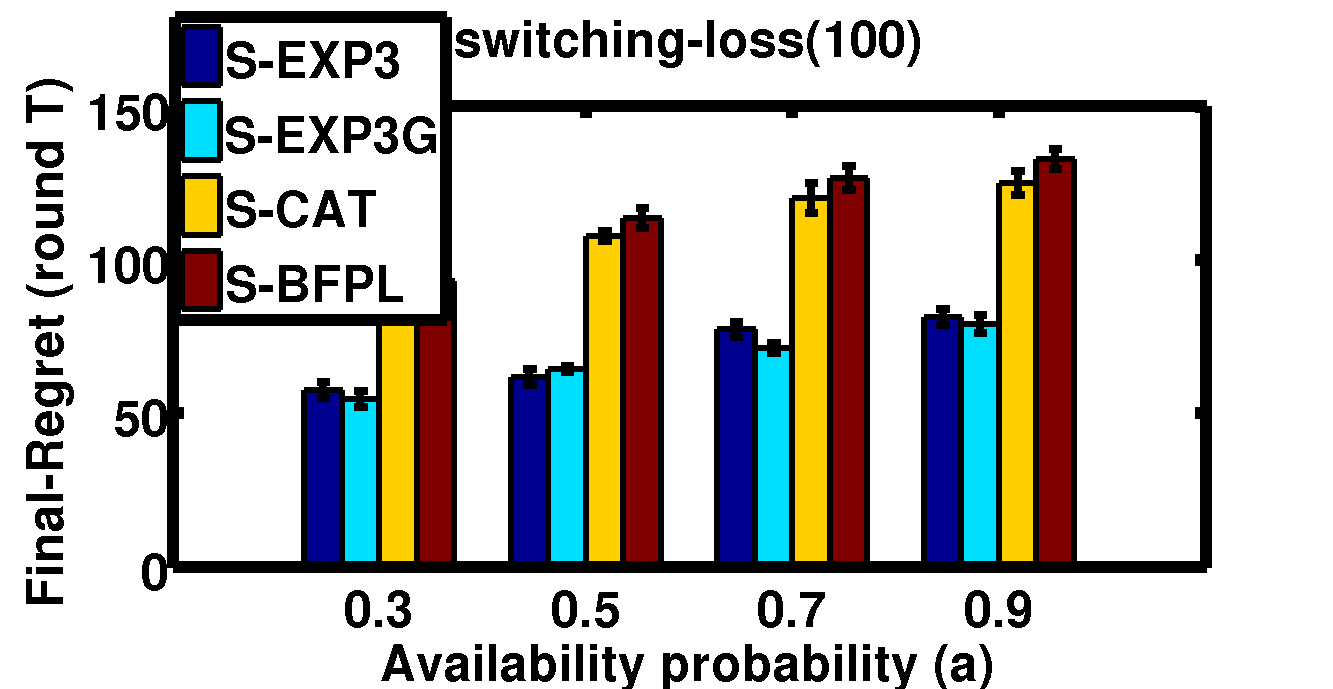}
		\vspace{-6pt}
		\caption{Final regret (at round $T$) vs availability probabilities($(a)$)}
		\label{fig:reg_vs_p}
	\end{center}
\end{figure}

\subsection{Regret vs Varying Availabilities.} We next conduct a set of experiments to compare the regret performances of the algorithms with varying availability probabilities: For this we assign same availability $a_i = a \in [0.1]$ to every item $i \in [K]$ for $a = 0.3, 0.5, 0.7, 0.9$ and plot the final cumulative regret of each algorithm.


\textbf{Remarks}
From Fig.~\ref{fig:reg_vs_p}, we again note our algorithms outperform the other two by a large margin for almost every $p$.
The performance of BSFPL is worse, it steadily decreases with increasing availability probability due to the explicit $O(T^{4/5})$ exploration rounds in the initial phase of BSFPL, and even thereafter it keeps on suffering the loss of the uniform policy scaled by the exploration probability. 

\begin{figure*}[t]
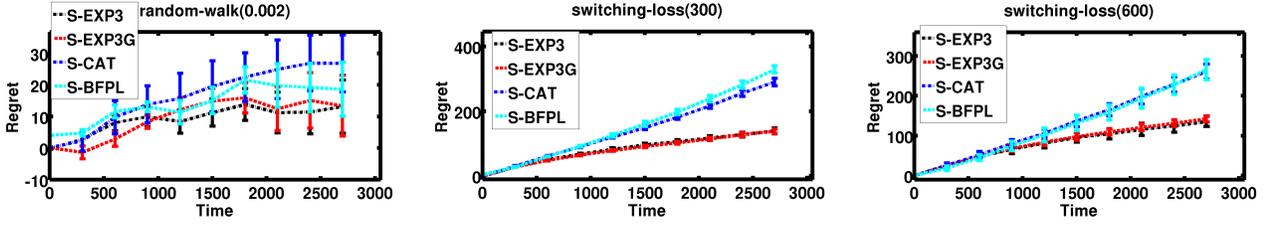

	\begin{center}
		\includegraphics[width=0.33\textwidth]{K=10_rk.png}
		\includegraphics[width=0.33\textwidth]{K=10_sw.png}
		\includegraphics[width=0.33\textwidth]{K=10_sw2.png}
		\vspace{-20pt}
		\caption{Regret vs time: General availabilities}
		\label{fig:reg_dep}
	\end{center}
\end{figure*}

\begin{figure*}[t]
	\begin{center}
		\includegraphics[width=0.33\textwidth]{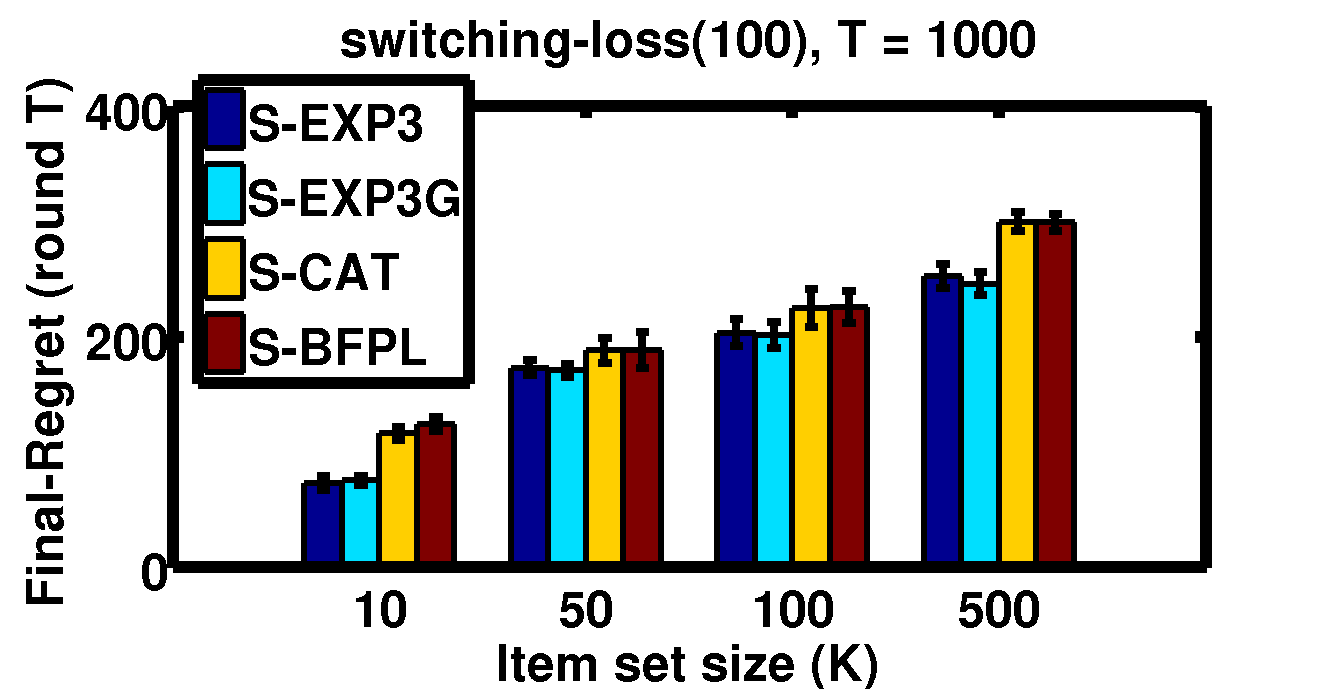}
		\includegraphics[width=0.33\textwidth]{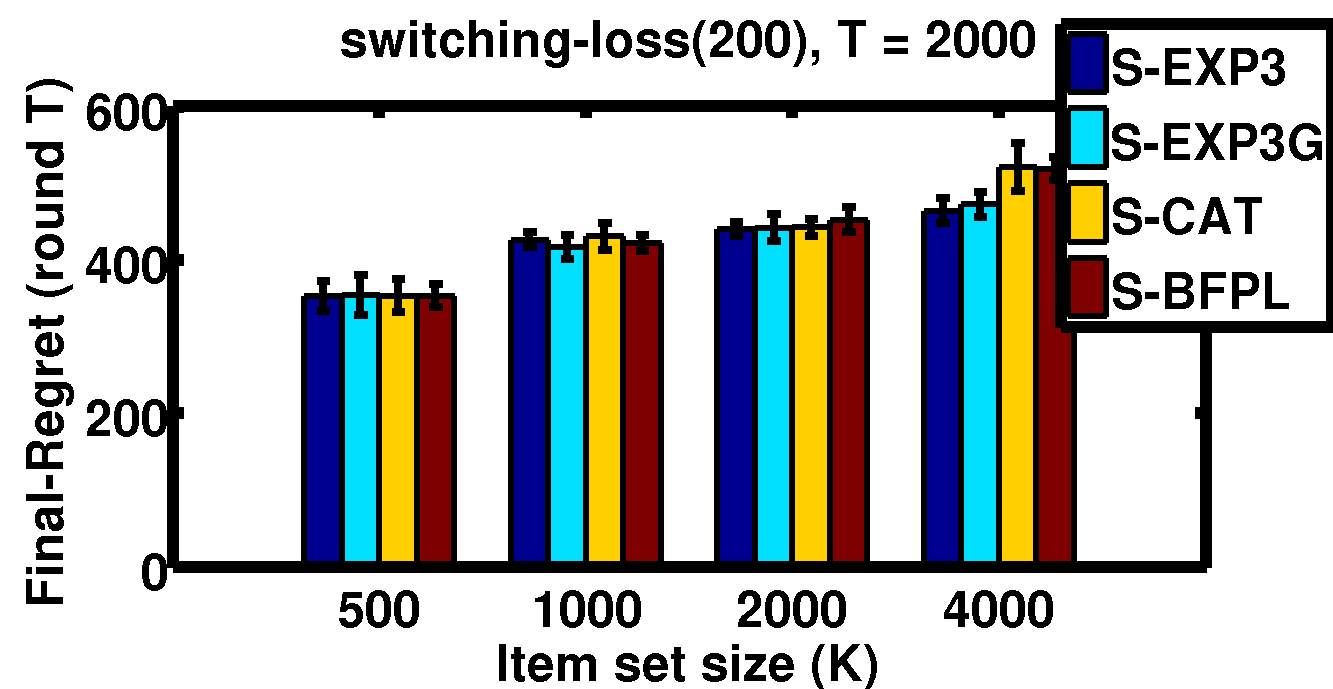}
		\includegraphics[width=0.33\textwidth]{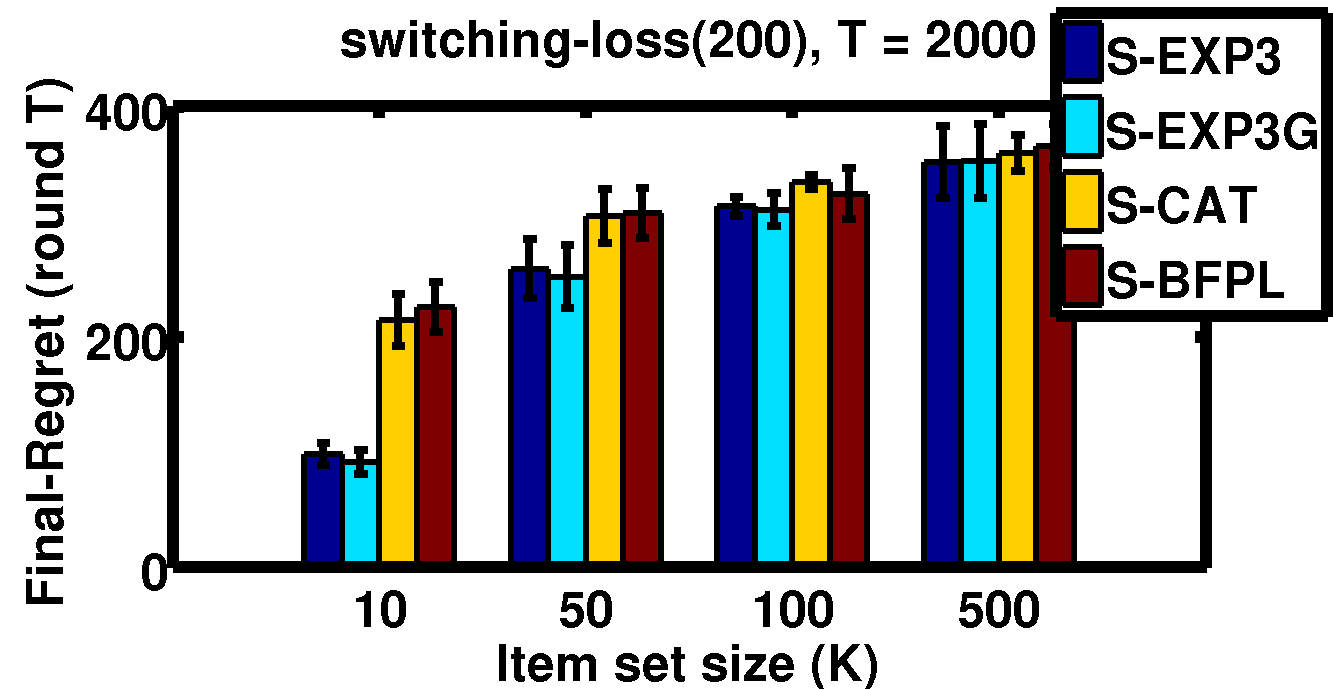}
		\vspace{-20pt}
		\caption{Final regret (at round $T$) vs item size $(K)$}
		\label{fig:reg_vs_K}
		\vspace{0pt}
	\end{center}
\end{figure*}

\subsection{Correlated (General) Availabilities}
We now assess the performances when the availabilities of items are dependent (description in Sec.~\ref{sec:prob}).

\textbf{Environments.}
To enforce dependencies of item availabilities we generate each set $S_t$ by drawing a random sample from a Gaussian$(\bmu, \Sigma)$ such that $\bmu_i = 0,\, \forall i \in [K]$, and $\Sigma$ is some random $K\times K$ positive definite matrix, e.g. block diagonal matrix with strong correlations among certain groups of items. More precisely, 
at each round $t$, we first sample a random $K$-vector, say $v_t$, from Gaussian$(\bmu, \Sigma)$ and we set $S_t = {i \in [K]|v_t(i)>0}$, i.e. $S_t$ includes all those items whose corresponding coordinates are non-negative--this thus enforces item dependencies in the resulted $S_t$ if $\Sigma$ is block diagonal (or any correlation matrix). 
To generate the loss sequences, we use similar techniques described in Sec.~\ref{subsec:expts_indep}.
The algorithm parameters are set as defined in Thm9. 


\textbf{Remarks.} From Fig.~\ref{fig:reg_dep} one can again verify the superior performance of our algorithms over \algscat\, and \algkand, however the effect is only visible for large $T$ as for smaller time steps $t$, the $O(2^K)$ terms dominates the regret performance, but as $t$ shoots higher our optimal $O(\sqrt T)$ rate outperforms the suboptimal $O(T^{2/3})$ and $O(T^{4/5})$ rates of \algscat\, and \algkand\, respectively.

\subsection{Regret vs Varying Item-size $(K)$.} Finally we also conduct a set of experiments changing the item set size $K$ over a wide range ($K = 10$ to $4000$). We report the final cumulative regret of all algorithms vs. $K$ for different switching loss sequence for both independent and general availabilities, as specified in Fig.~\ref{fig:reg_vs_K}. 


\textbf{Remark.} Fig.~\ref{fig:reg_vs_K} shows that the regret of each algorithm increases with $K$, as expected. As before the other two baselines perform suboptimally in comparison to our algorithms, however the interesting thing to note is the relative performance of \algicat \, and \algicatg---as per Thm.~\ref{thm:reg_icat} and~\ref{thm:reg_icat_g}, \algicat\, must outperform \algicatg\, with increasing $K$, however the effect does not seem to be so drastic experimentally, possibly revealing the scope of improving Thm.~\ref{thm:reg_icat_g} in terms of a better dependency in $K$.

\section{Conclusion and Future Work}
\label{sec:concl}
We have presented a new approach that brought an improved rate for the setting 
of sleeping bandits with adversarial losses and stochastic availabilities 
including both minimax and instance-dependence guarantees. 
While our bounds guarantee a regret of $\tilde O(\sqrt{T}),$  there 
are several open questions before the studied setting can be considered as closed.
Firstly, for the case of independent availabilities, we provide a regret guarantee of
$\tilde O(K^2\sqrt{T})$, leaving open whether $\tilde O(\sqrt{KT})$ is possible as
in the standard non-sleeping setting. 
Secondly, while we provided computationally efficient  (i.e., with per-round complexity of order $O(tK)$)  
 \algicat, for the case of general availabilities and provided instance
 dependent regret guarantees for it, the worst case regret guarantee
 still amounts to $\tilde O(\sqrt{2^KT}).$ Therefore, it is still unknown 
 if for the general availabilities we can get an algorithm that would be both computationally 
 efficient and have  $\tilde O(\text{poly}(K)\sqrt{T})$ regret guarantee in the worst case.
We would like to point out that the new techniques could be potentially 
used to  provide new algorithms and guarantees in settings with similar challenges
as in sleeping bandits, such as rotting or dying bandits.
Finally, having algorithms for sleeping bandits with $\tilde O(\sqrt{T})$  regret guarantees, opens a way to deal with sleeping constraints in more challenging 
 structured bandits with large or infinite number of arms and having the regret 
 guarantee depend not on number of arms but rather some effective dimension 
 of the arms' space.


\section*{Acknowledgements}
We thank the anonymous reviewers for their valuable suggestions. 
The research presented was supported  French National Research Agency project  BOLD (ANR-19-CE23-0026-04) and by European CHIST-ERA project DELTA. We also wish to thank Antoine Chambaz and Marie-H\'el\`ene Gbaguidi for supporting Aadirupa's internship at Inria, and Rianne de Heide for the thorough proofreading. 



\bibliographystyle{icml2020}
\bibliography{bib_sleeping_bandits}  

\newpage

\appendix
\appendix
\onecolumn{
	
\section*{\centering\Large{Supplementary: Improved Sleeping Bandits with Stochastic Actions Sets \\ and Adversarial Rewards}}
\vspace*{1cm}
	
\allowdisplaybreaks
	
\section{Appendix for Sec.~\ref{sec:algo_indep}}

\subsection{Proof of Lem.~\ref{lem:conc_barq_p}} 
\label{app:conc_icatp}

\concbarqp*

\begin{proof}
Let $t \in [T]$ and $\delta \in (0,1)$. We start by noting the concentration of $\hat a_{ti} = \frac{1}{t} \sum_{\tau =1}^t \1(i \in S_\tau)$ to $a_i$ for all $i \in [K]$. By Bernstein's inequality together with a union bound over $i=1,\dots,K$: with probability at least $1-\delta$, for all $i \in [K]$
\begin{equation}
	 |a_i - \hat a_{ti}| < \sqrt{\frac{2 a_i(1-a_i)\ln \frac{K}{\delta}}{t}} + \frac{2\ln \frac{K}{\delta}}{3t} \,.
	 \label{Bernstein}
\end{equation}

Then, $\beta = 2 \log (K/\delta)$, using the definitions of $P_{\hat \a_t}(S)  = \Pi_{i =1}^{K}\hat a_{ti}^{\1(i \in S)} (1-\hat a_{ti})^{1-\1(i \in S)}$ and $P_{\a}(S) = \Pi_{i =1}^{K} a_{i}^{\1(i \in S)} (1-a_{i})^{1-\1(i \in S)}$, we get
	\begin{align}
	\label{eq:concP}
	\nonumber |P_{\hat \a_t}(S) - P_{\a}(S)| & = P_{\a}(S)\bigg|\frac{P_{\hat \a_t}(S)}{P_{\a}(S)} - 1\bigg| = P_{\a}(S)\bigg| \Pi_{i =1}^{K}\bigg(\frac{\hat a_{ti}}{a_i}\bigg)^{S_i} \bigg(\frac{1-\hat a_{ti}}{1-a_i}\bigg)^{1-S_i} - 1\bigg| \nonumber \\
	& = P_{\a}(S)\bigg| \Pi_{i =1}^{K}\bigg(\frac{\hat a_{ti} - a_i}{a_i} + 1\bigg)^{S_i} \bigg(\frac{a_i-\hat a_{ti}}{1-a_i} + 1\bigg)^{1-S_i} - 1\bigg| \nonumber \\
	& \leq P_{\a}(S) \bigg( \Pi_{i =1}^{K}\bigg(\frac{|\hat a_{ti} - a_i|}{a_i} + 1\bigg)^{S_i} \bigg(\frac{|a_i-\hat a_{ti}|}{1-a_i} + 1\bigg)^{1-S_i} - 1\bigg)  \,,
	\end{align}
	where the last inequality is because for any $\epsilon_1,\dots,\epsilon_K \in [-1,1]^K$
	\begin{align*}
		\left| \prod_{i=1}^K (1+\epsilon_i) - 1 \right| 
			& = \max\left\{\prod_{i=1}^K (1+\epsilon_i) - 1, 1-\prod_{i=1}^K (1+\epsilon_i)  \right\} \\
			& \leq \max\left\{\prod_{i=1}^K \big(1+|\epsilon_i|\big) - 1, 1-\prod_{i=1}^K \big(1-|\epsilon_i|\big)  \right\} \\
			& \leq \prod_{i=1}^K \big(1+|\epsilon_i|\big) - 1 \,.
	\end{align*}

	Hence, denoting
	\[
	Z_i := \bigg(1 + \sqrt{\frac{\beta(1-a_i)}{a_i t}} + \frac{\beta}{3a_i t} \bigg)^{S_i} \bigg(1 + \sqrt{\frac{\beta a_i}{(1-a_i) t}} + \frac{\beta}{3(1-a_i) t}\bigg)^{1-S_i} \,,
	\]
	and using Bernstein's inequality~\eqref{Bernstein}, with probability $1-\delta$, we have
	\begin{equation}
		\label{eq:Phat_trueP}
		|P_{\hat \a_t}(S) - P_{\a}(S)|  \leq P_{\a}(S) \left( \prod_{i=1}^K Z_i - 1\right) \,.
	\end{equation}

	Therefore,
	\begin{align}
	| q^*_t(i) - \bar q_t(i)|  & =  \bigg| \sum_{S \in 2^{[K]}}q_t^S(i) \big(P_{\a}(S) - P_{\hat \a_t}(S)\big) \bigg| \nonumber \hspace*{1cm} \leftarrow \text{By definition~\eqref{eq:barqt_def}}  \\
	& \le  \sum_{S \in 2^{[K]}}q_t^S(i) \big|P_{\a}(S) - P_{\hat \a_t}(S)\big|  \nonumber \\
	&  \le \sum_{S \in 2^{[K]}}  P_{\a}(S) \left( \prod_{i=1}^K Z_i - 1\right)  \hspace*{1.7cm} \leftarrow \text{From~\eqref{eq:Phat_trueP} and $|q_t^S(i)| \leq 1$} \nonumber \\
	&  = \E_{S \sim \P_{\a}} \left[ \prod_{i=1}^K Z_i \right] - 1 \,. \label{eq:error_barq}
	\end{align}
	But by independence of the $Z_i$, we know
	\begin{equation}
	\label{eq:expectedZi}
	\E_{S \sim \P_{\a}} \left[ \prod_{i=1}^K Z_i \right] 
	= \prod_{i=1}^K \E_{S \sim \P_{\a}}\big[Z_i\big]   \,.
	\end{equation}
	Now, computing each expectation
	\begin{align}
	\E_{S \sim \P_{\a}}\big[Z_i\big] & = \E_{S_i \sim a_i} \left[ \bigg(1 + \sqrt{\frac{\beta(1-a_i)}{a_i t}} + \frac{\beta}{3a_i t} \bigg)^{S_i} \bigg(1 + \sqrt{\frac{\beta a_i}{(1-a_i) t}} + \frac{\beta}{3(1-a_i) t}\bigg)^{1-S_i} \right]   \nonumber \\
	& = a_i \Big(1 + \sqrt{\frac{\beta (1-a_i)}{a_i t}} + \frac{\beta}{3a_i t} \Big) + (1-a_i) \Big(1 + \sqrt{\frac{\beta a_i}{(1-a_i) t}} + \frac{\beta}{3(1-a_i) t}\Big)  \nonumber \\
	& \leq 1 + 2 \sqrt{\frac{\beta a_i(1-a_i) }{t}} + \frac{2\beta}{3t}	\label{eq:instancedependent_bound}  \\ 
	& \leq 1 +  \sqrt{\frac{\beta}{t}} + \frac{2\beta}{3t}
	 \,. \nonumber 
	\end{align}
	
	Therefore, combining with Inequalities~\eqref{eq:error_barq} and~\eqref{eq:expectedZi}, it yields using $1+x \leq e^x$ for $x\geq 0$,
	\[
	| q^*_t(i) - \bar q_t(i)|  \leq \Big(1 +  \sqrt{\frac{\beta}{t}} + \frac{2\beta}{3t}\Big)^{K} - 1  \leq e^{K \sqrt{\frac{\beta}{t}} + \frac{2K\beta}{3t}} - 1 \,.
	\]
	
	Then, assuming $t \geq 25 \beta K^2 / 9$, we have
	\[
	 K \sqrt{\frac{\beta}{t}} + \frac{2K\beta}{3t} \leq 1
	\]
	which implies since $e^x \leq 1 + 2x $ for all $0\leq x \leq 1$ and replacing $\beta = 2 \log (K/\delta)$
	\[
	| q^*_t(i) - \bar q_t(i)|  \leq 2 K \sqrt{\frac{2\log(K/\delta)}{t}} + \frac{8K\log(K/\delta)}{3t} \,.
	\]
	If $t\leq  15 \beta K^2 / 9$, then,
	\[
	| q^*_t(i) - \bar q_t(i)|  \leq 1 \leq \frac{4 K}{3} \sqrt{\frac{\log(1/\delta)}{t}} \,.
	\]
	Therefore, for all $\delta \in (0,1)$, with probability at least $1-\delta$, for all $1\leq i\leq K$
	\[
	| q^*_t(i) - \bar q_t(i)|  \leq 2 K \sqrt{\frac{2\log(K/\delta)}{t}} + \frac{8K\log(K/\delta)}{3t} \,.
	\]

\end{proof}

\subsection{Proof of Thm.~\ref{thm:reg_icat}}
\label{app:reg_icat}

\ubicat*

\begin{proof}
Consider any fixed set $S \subseteq [K]$, and suppose we run EXP3 algorithm on the set $S$, over any nonnegative sequence of losses $\hat \ell_1, \hat \ell_2, \ldots \hat \ell_T$ over items of set $S$, and consequently with weight updates $\q_1^S, \q_2^S, \ldots \q_T^S$ where as per EXP3 algorithm 
\begin{equation}
	\label{eq:qtdef}
	q_t^S(i) = \frac{e^{-\eta \sum_{\tau=1}^{t-1}\hat \ell_\tau(i)}}{\sum_{j \in S} e^{-\eta \sum_{\tau=1}^{t-1}\hat \ell_\tau(j)}} \,, \quad i \in S
\end{equation}
$\eta > 0$ being the learning rate of the EXP3 algorithm. Note here that for the analysis, we assume for this hypothetical EXP3 algorithm which plays on actions in $S$ only, the available sets are fixed to $S$ for all $t\in [T]$. We also consider that $q_t^S(i) = 0$ for $i \notin S$. 

Then from the standard regret analysis of the EXP3 algorithm it is known that \cite{PLG06} for all $i \in S$
\begin{align*}
 \sum_{t = 1}^{T}\big< \q_t^S,\hat \ell_t \big> - \sum_{t = 1}^T\hat \ell_t(i)  \le \frac{\log K}{\eta} + \eta \sum_{t = 1}^T \sum_{k \in S} q_t^S(k)\hat \ell_t(k)^2 \,.
\end{align*}
Let $\pi^*:S \mapsto [K]$ be any strategy. Then, applying the above regret bound to the choice $i = \pi^*(S)$ and taking the expectation over $S \sim P_{\a}$ and over the possible randomness of the estimated losses, we get
\begin{equation}
  \sum_{t = 1}^{T} \E\Big[\big< \q_t^S,\hat \ell_t \big>\Big] - \sum_{t = 1}^T\E\Big[\hat \ell_t(\pi^*(S))\Big] \le \frac{\log K}{\eta} + \eta \sum_{t = 1}^T \E\bigg[\sum_{k \in S} q_t^S(k) \hat \ell_t(k)^2 \bigg] \,.
 	\label{eq:RegretEXP3}
\end{equation}
Note that we did not make any assumptions on the estimated losses yet expect non-negativity.  

For simplicity we abbreviate $S \sim \P_{\a}$ as $S \sim \a$ henceforth. We denote the sigma algebra generated by the history of outcomes till time $t$ (i.e. $\{i_\tau,S_\tau\}_{\tau=1}^{t}$) by $\cH_{t}$. Then recall from Eqn. \eqref{eq:reg} that we wish to analyse the regret $R_T$, which is defined with respect to the loss sequence $\ell_1, \ell_2,\ldots  \ell_T$ and with activation sets $S_1,\dots,S_T$. That is, we need to upper-bound 
\[
R_{T} = \max_{\pi: 2^{[K]}  \mapsto [K]} \left\{ \sum_{t=1}^{T}\E\big[\ell_t(i_t)\big] - \sum_{t=1}^{T} \E\big[\ell_t(\pi(S_t)) \big] \right\} \,.
\]

Towards proving the above regret bound of \algicat\, from Inequality~\eqref{eq:RegretEXP3}, we now first establish the following lemmas that relates the different expectations of Inequality~\eqref{eq:RegretEXP3} with quantities related to the regret. 

\firstterm*

\begin{proof}
Let $t\in [T]$. We first consider the probabilistic event (denoted $\cE_{t}$) that~\eqref{eq:barq_conc} is true. That is, for all $i \in [K]$
\begin{equation}
	\label{eq:concentrationqbar}
	|q^*_t(i) - \bar q_t(i)| \le \lambda_t \,.
\end{equation}
Remark that $\cE_t$ is $\cH_{t-1}$ measurable since $\bar \q_t$ and $\q_t^*$ are $\cH_{t-1}$ measurable.

We start from the right hand side noting that:
\begin{align*}
\E &\Big[\big< \q_t^S,\hat \ell_t \big> \big|  \cE_t \Big]  = \E\bigg[\sum_{i \in S} q_t^S(i)\hat \ell_t(i) \Big|  \cE_t \bigg]\\
& = \E\Bigg[\E\bigg[\sum_{i \in S} q_t^S(i)\dfrac{\ell_{t}(i)\1(i_t = i)}{\bar q_t(i) + \lambda_t} \ \Big|\  S,S_t,\cH_{t-1}\bigg]  \bigg|\  \cE_t \Bigg] \hspace*{2cm} \leftarrow \text{by definition \eqref{eq:loss_estimate} of $\hat \ell_t$} \\
& = \E\Bigg[ \E\bigg[ \sum_{i \in S_t} q_t^S(i)\dfrac{\ell_{t}(i)q_t^{S_t}(i)}{\bar q_t(i) + \lambda_t} \bigg| S_t, \cH_{t-1} \bigg] \bigg| \ \cE_t\Bigg] \hspace*{2.9cm} \leftarrow \text{taking the expectation over $i_t \sim \q_t^{S_t}$} \\
& = \E\Bigg[\E\bigg[ \sum_{i \in S_t} q_t^*(i)\dfrac{\ell_{t}(i)q_t^{S_t}(i)}{\bar q_t(i) + \lambda_t} \ \bigg|\  \cH_{t-1} \bigg] \bigg| \cE_t \Bigg] \hspace*{3.3cm} \leftarrow \text{taking the expectation over $S \sim \a$} \\
& \ge \E\Bigg[\E\bigg[ \sum_{i \in S_t} q_t^*(i)\dfrac{\ell_{t}(i)q_t^{S_t}(i)}{q_t^*(i) + 2\lambda_t} \bigg| \cH_{t-1} \bigg] \bigg| \cE_t \Bigg]  \hspace*{3.35cm} \leftarrow \text{from Inequality~\eqref{eq:concentrationqbar}}\\
& \ge \E\Bigg[\E\bigg[ \sum_{i \in S_t} \ell_{t}(i)q_t^{S_t}(i)\Big( 1 - \frac{2\lambda_t}{q_t^*(i)}\Big) \bigg| \cH_{t-1}\bigg] \bigg| \cE_t \Bigg] \hspace*{2.45cm} \leftarrow \text{since $(1+x)^{-1} \geq 1-x$ for $x \geq 0$}  \\
& \geq \E\Bigg[\E\bigg[ \sum_{i \in S_t} \ell_{t}(i)q_t^{S_t}(i) \bigg| \cH_{t-1}\bigg]\bigg| \cE_t \Bigg] - \E\Bigg[ 2\lambda_t \sum_{i=1}^K \frac{\E\big[q_t^{S_t}(i)\big|\cH_{t-1}\big]}{q_t^*(i)} \ \bigg| \ \cE_t\Bigg] \leftarrow \text{since $q^*_t$ is $\cH_{t-1}$ measurable}\\
& = \E\bigg[\E\Big[ \sum_{i \in S_t} \ell_{t}(i)q_t^{S_t}(i) \Big| \cH_{t-1} \Big]\Big| \ \cE_t \bigg] -2K\lambda_t \hspace*{1cm} \leftarrow \text{by definition of $q^*_t(i) = \E\big[q_t^{S_t}(i) \big| \cH_{t-1}\big]$}\\ 
& = \E\big[ \ell_{t}(i_t) \big| \cE_t \big]  -2K \lambda_t\,.
\end{align*}

It remains now to deal with the case when the event $\cE_t$ (i.e., \eqref{eq:concentrationqbar}) is not satisfied. By Lem.~\ref{lem:conc_barq_p}, this happens with probability smaller than $\delta$. Therefore because the estimated losses are smaller than $1/\lambda_t$, we have 
\begin{align*}
	\E\Big[\big< \q_t^S,\hat \ell_t \big> \Big] & = \E\Big[\big< \q_t^S,\hat \ell_t \big>\ \big|\ \cE_t \Big] Pr(\cE_t) + \E\Big[\big< \q_t^S,\hat \ell_t \big>\big| \ \text{not } \cE_t  \Big] Pr\big(\text{not } \cE_t \big)\\
	& \geq  \E\big[\ell_t(i_t) \big| \cE_t] Pr(\cE_t) - 2K\lambda_t - \frac{Pr\big(\text{not } \cE \big)}{\lambda_t} \\
	& \geq \E\big[\ell_t(i_t)] - 2K\lambda_t - \frac{\delta}{\lambda_t} \,.
\end{align*}
\end{proof}

\secondterm*

\begin{proof} The proof follows a similar analysis to the one of Lem.~\ref{lem:first_term}. We start by assuming the probabilistic event $\cE_t$ of Inequality~\eqref{eq:concentrationqbar} and to ease the notation, we denote by $\E_t = \E\big[ \ \cdot \ \big| \cE_t\big]$ the conditional expectation given $\cE_t$ holds true. Then, 
\begin{align*}
\E_t\Big[ \hat \ell_t(i)  \Big]  & = \E_t\Bigg[\E \bigg[\dfrac{\ell_{t}(i)\1(i_t = i)}{\bar q_t(i) + \lambda_t}  \bigg| S_t, \cH_{t-1} \bigg] \Bigg] \hspace*{1.17cm} \leftarrow \text{by definition of $\hat \ell_t$}\\
& =  \E_t\Bigg[\E\bigg[\dfrac{\ell_{t}(i)q_t^{S_t}(i)}{\bar q_t(i) + \lambda_t} \bigg|  \cH_{t-1} \bigg] \Bigg]
\hspace*{2.1cm} \leftarrow \text{taking the expectation over $i_t \sim \q_t^{S_t}$}\\
& = \E_t\Bigg[\dfrac{\ell_{t}(i)q_t^*(i)}{\bar q_t(i) + \lambda_t}\Bigg] \hspace*{3.74cm} \leftarrow \text{taking the expectation over $S_t \sim \a$} \\
& \le  \E_t\Big[\ell_{t}(i)\Big] \hspace*{4.7cm} \leftarrow \text{by~\eqref{eq:concentrationqbar}} \\
& = \ell_t(i) \,.
\end{align*}
Similarly to the proof of Lem.~\ref{lem:first_term}, using that $\cE$ holds with probability at least $1-\delta$ and using that the estimated losses are in $[0,1/\lambda_t]$, we get
\[
	\E\Big[ \hat \ell_t(i)  \Big] \leq \ell_t(i) + \frac{\delta}{\lambda_t} \,.
\]
\end{proof}

\varterm*

\begin{proof}
Let $t \in [T]$. Then, denoting $\cE_t$ the event such that~\eqref{eq:concentrationqbar} holds, we can derive:
\begin{align*}
\E_t\Bigg[ \sum_{i \in S} q_t^S(i)\hat  \ell_t(i)^2\Bigg]  
& = \E_t\Bigg[\E\bigg[\sum_{i \in S} q_t^S(i)\dfrac{\ell_{t}^2(i)\1(i_t = i)}{(\bar q_t(i) + \lambda_t)^2} \bigg| S,S_t, \cH_{t-1} \bigg]\Bigg] \hspace*{.62cm} \leftarrow \text{by definition of $\hat \ell_t$}\\
& = \E_t\Bigg[\E\bigg[ \sum_{i \in S_t} q_t^S(i)\dfrac{\ell_{t}^2(i)q_t^{S_t}(i)}{(\bar q_t(i) + \lambda_t)^2} \bigg| S_t, \cH_{t-1}\bigg] \Bigg] \hspace*{1.1cm} \leftarrow \text{taking the expectation over $i_t \sim \q_t^{S_t}$}\\
& \le \E\Bigg[\E\bigg[ \sum_{i \in S_t} q_t^*(i)\dfrac{q_t^{S_t}}{(\bar q_t(i) + \lambda_t)^2} \bigg| \cH_{t-1}\bigg]\Bigg] \,,
\end{align*}
where, in the last inequality, we took the expectation over $S_t \sim \a$ and used that $\ell_t(i)^2 \leq 1$. Therefore, taking the expectation over $S_t \sim \a$, we get
\begin{align*}
\E_t\Bigg[ \sum_{i \in S} q_t^S(i)\hat  \ell_t(i)^2\Bigg]
& \leq \E_t\Bigg[\sum_{i \in [K]} \dfrac{q_t^*(i)^2}{(\bar q_t(i) + \lambda_t)^2} \Bigg] \leq K \,,
\end{align*}
where the last inequality is because under $\cE_t$, $|\bar q_t(i) - q_t^*(i)| \leq \lambda_t$. Now, using that $\hat \ell_t(i)^2 \leq 1/\lambda_t^2$, and using that $\cE_t$ is satisfied with probability at least $1-\delta$, we conclude the proof of the lemma:
\[
	\E\Bigg[ \sum_{i \in S} q_t^S(i)\hat  \ell_t(i)^2\Bigg] \leq K + \frac{\delta}{\lambda_t^2} \,.
\]
\end{proof}

Given the above claims in place, we are now in a position to prove the main theorem as shown below. Recall from Eqn. \eqref{eq:reg}, the actual regret definition of our proposed algorithm: 
\[
R_{T}(\text{\algicat}) = \max_{\pi: 2^{[K]} \mapsto [K]}  \sum_{t=1}^{T}\E\Big[\ell_t(i_t) - \ell_t(\pi(S_t))\Big]
\]
Denoting the best policy $\pi^* := \arg\min_{\pi: 2^{[K]}\mapsto [K]}\sum_{t = 1}^{T}\E_{S_t \sim P_{\a}}[\ell(\pi(S_t))]$, and combining the claims from Lem.~\ref{lem:first_term},~\ref{lem:second_term}, we  get:
\begin{align*}
R_{T}&(\text{\algicat}) =  \sum_{t=1}^{T}\E\Big[\ell_t(i_t) - \ell_t(\pi^*(S_t))\Big]\\
& \leq  \sum_{t=1}^{T}\E\Big[\big< \q_t^S,\hat \ell_t \big> + 2K \lambda_t + \frac{\delta}{\lambda_t} - \hat \ell_t(\pi^*(S_t)) + \frac{\delta}{\lambda_t} \Big]  \hspace*{1cm} \leftarrow \text{from Lemmas~\ref{lem:first_term} and~\ref{lem:second_term}}\\
& \le  2K \sum_{t = 1}^{T} \lambda_t + 2 \sum_{t=1}^T \frac{\delta}{\lambda_t} + \sum_{t=1}^{T}\E\Big[\big< \q_t^S,\hat \ell_t \big>  -  \hat \ell_t(\pi^*(S))\Big] \,.
\end{align*}
Then, we can further upper-bound the last term in the right-hand-side using Inequality~\eqref{eq:RegretEXP3} and  Lem.~\ref{lem:var_term}, which yields
\begin{align}
	R_{T}(\text{\algicat}) & \leq  2K \sum_{t = 1}^{T} \lambda_t + 2 \sum_{t=1}^T \frac{\delta}{\lambda_t}  + \frac{\log K}{\eta} + \eta \sum_{t = 1}^T \E\bigg[\sum_{k \in S} q_t^S(k)\hat \ell_t(k)^2 \bigg] \nonumber \\
		& \leq 2K \sum_{t = 1}^{T} \lambda_t + 2 \sum_{t=1}^T \frac{\delta}{\lambda_t}  + \frac{\log K}{\eta} + \eta KT + \eta  \sum_{t=1}^T \frac{\delta}{\lambda_t^2} \nonumber \\
		& \leq \frac{\log K}{\eta} + \eta KT  +  2K \sum_{t = 1}^{T} \lambda_t + 3 \sum_{t=1}^T \frac{\delta}{\lambda_t^2} \,,
		\label{eq:sleeping_regret_bound}
\end{align}
where in the last-inequality we used that  $\eta \leq 1$ and $\lambda_t \leq 1$. Otherwise, we can always choose $\min\{1,\lambda_t\}$ instead of $\lambda_t$ in the algorithm and Lem.~\ref{lem:conc_barq_p} would still be satisfied.

The proof is concluded by replacing $\lambda_t  = 2 K \sqrt{\frac{2\log(K/\delta)}{t}} + \frac{8K\log(K/\delta)}{3t}$ and by upper-bounding the two sums
\begin{align*}
	\sum_{t=1}^T \lambda_t  
		& = 2K \sqrt{2 \log \Big(\frac{K}{\delta}\Big)} \sum_{t=1}^T \frac{1}{\sqrt{t}} + \frac{8K}{3} \log \Big(\frac{K}{\delta}\Big) \sum_{t=1}^T \frac{1}{t} \\
		& \leq 2K \sqrt{2 \log \Big(\frac{K}{\delta}\Big) T} + \frac{8K}{3} \log \Big(\frac{K}{\delta}\Big) (1+\log T) 
\end{align*}
and using $\lambda_t \geq 2K\sqrt{2 \log(K/\delta)/t}$, we have
\begin{align*}
	\sum_{t=1}^T \frac{1}{\lambda_t^2}
		& \leq \frac{1}{8 K^2  \log (K/\delta)} \sum_{t=1}^T t \leq \frac{T^2}{8 K^2  \log (K/\delta)} \leq \frac{T^2}{8 K^2} \,.
\end{align*}
Then, using $\delta := K/T^2$, $\log(K/\delta) = 2 \log(T)$, we can further upper-bound:
\[
	\sum_{t=1}^T \lambda_t \leq 4K \sqrt{T\log T} + \frac{8K}{3} (1+\log T)(\log T) \leq 7K \sqrt{T\log T} 
\]
and
\[
	 3 \sum_{t=1}^T \frac{\delta}{\lambda_t^2} \leq \frac{3}{8K} \leq 1 \,. 
\]
Thus, upper-bounding the two sums into~\eqref{eq:sleeping_regret_bound}, we get
\[
	R_{T}(\text{\algicat}) \leq \frac{\log K}{\eta} + \eta KT + 14 K^2 \sqrt{T \log T} + 1 \,.
\]
Optimizing $\eta = \sqrt{(\log K)/KT}$ and upper-bounding $\sqrt{KT \log K} \leq K^2 \sqrt{T}$, we finally conclude the proof
\[
	R_{T}(\text{\algicat}) \leq 16 K^2 \sqrt{T \log T} + 1 \,.
\]
\end{proof} 

\subsection{Proof of Lem.~\ref{lem:conc_barq_p_e}}
\label{app:conc_icate}

\concbarqe*

\begin{proof}
Let $t \in [T]$. We start by remarking that for any $i \in [K]$:
\begin{align}
\label{eq:barq_conce0}
|q^*_t(i) - \tilde q_t(i)| \le |q^*_t(i) - \bar q_t(i)| + |\bar q_t(i) - \tilde q_t(i)| \,.
\end{align}

Note that $\q_t^{S^{(1)}_t}(i), \q_t^{S^{(2)}_t}(i), \ldots \q_t^{S^{(T)}_t}(i)$ are independent of each other given the past $\cH_{t-1}$. Furthermore, they are in $[0,1]$ and are unbiased estimates of $\bar q_t(i) = \E_{S \sim \hat \a_t}[q_t^S(i)]$, i.e., for all $i \in [K], \tau \in [t]$
\[
	\E_{\hat \a_t}\Big[\q_t^{S^{(\tau)}_t}(i)\Big] = \bar \q_t(i) \,.
\] 
Thus, using Hoeffding's inequality and a union bound over all $i \in [K]$, we get that with probability at least $(1-\delta/2)$:

\begin{align}
\label{eq:barq_conce1}
|\bar q_t(i) - \tilde q_t(i)| \le \sqrt{ \frac{1}{2t} \ln \frac{2K}{\delta}} \,.
\end{align}

Furthermore, from Lem.~\ref{lem:conc_barq_p} we have, for all $i \in [K]$, with probability at least $(1-\delta/2)$: 
\begin{align}
\label{eq:barq_conce2}
|q^*_t(i) - \bar q_t(i)| \le 2 K \sqrt{\frac{2\log(2K/\delta)}{t}} + \frac{8K\log(2K/\delta)}{3t}. 
\end{align}
The proof follows combining Eqn. \eqref{eq:barq_conce0}, \eqref{eq:barq_conce1} and \eqref{eq:barq_conce2} and using $K\geq 1$.
\end{proof}

\subsection{Proof of Thm.~\ref{thm:reg_icat_e}}
\label{app:thm_icate}

\ubicate*

\begin{proof}
The regret bound can be proved using the same proof technique used for Thm.~\ref{thm:reg_icat}, except replacing the concentration result of Lem.~\ref{lem:conc_barq_p_e} in place of Lem.~\ref{lem:conc_barq_p}.

Concerning the computational time. At each round $t\geq 1$, the Alg.~\ref{alg:icat} performs the following operations:
\begin{itemize}[label={--},nosep]
	\item[a)] update $\hat a_{ti}$ for all $i \in [K]$ \hspace*{10.5cm} $\rightarrow$ Cost $O(K)$
	\item[b)] for each $\tau \in [t]$, sample $S_t^{(\tau)}$ from $P_{\hat \a_t}$ and compute $\q_t^{S_t^{(\tau)}}(i) \propto p_t(i) \1\big(i\in S_t^{(\tau)}\big)$ for each $i\in [K]$ \hspace*{.2cm} $\rightarrow$ Cost $O(tK)$
	\item[c)] compute the estimated losses $\hat \ell_t(i)$ and update $p_{t+1}(i)$ for each $i\in [K]$ \hspace*{4.1cm} $\rightarrow$ Cost $O(K)$ 
\end{itemize}
The time complexity to perform iteration $t$ is therefore $O(tK)$. 

As for spatial complexity, the algorithm only needs to keep track of $\hat \a_t \in [0,1]^K$ and $\p_t \in [0,1]^K$. Since the step b) above can also be performed sequentially, the total storage complexity is $O(K)$.
\end{proof}

\section{Appendix for Sec.~\ref{sec:algo_dep}}

\concbarqg*

\begin{proof}
Let $t \in [T]$. We start by first noting the concentration of $\hat P(S)$ for any $S \subseteq [K]$, which using Bernstein's inequality we know that given any fixed $\delta \in [0,1]$:
	
\begin{align*}
Pr\bigg( |P(S) - \hat P_t(S)| > \sqrt{\frac{2 P(S)(1-P(S))\ln \frac{1}{\delta}}{t}} + \frac{2\ln \frac{1}{\delta}}{3t} \bigg) \le \delta \,.
\end{align*}
				
Then consider the event 
\begin{equation}
	\label{eq:event_general}
	\cE_t := \left\{\forall S \subseteq [K]: |P(S) - \hat P_t(S)| >   \sqrt{\frac{2 P(S)(1-P(S))\ln \frac{2^K}{\delta}}{t}} + \frac{2\ln \frac{2^K}{\delta}}{3t}  \right\} \,.
\end{equation}
Taking a union bound over all $2^K$ subsets $S \subseteq [K]$, we get that: $Pr( \cE_t ) \ge 1- \delta$. 
Now recall in this case that by definition 
\begin{equation}
	q^*_t(i) := \E_{S\sim\P}[q_t^S(i)] = \sum_{S \in 2^{[K]}}P(S)q_t^S(i),
	\label{eq:qstar_sum}
\end{equation}
and similarly, 
\begin{equation}
\bar q_t(i) := \frac{1}{t} \sum_{\tau = 1}^{t} q_t^{S_\tau}(i) = \E_{S\sim\hat P_t(S)}[q_t^S(i)] = \sum_{S \in 2^{[K]}}\hat P_t(S)q_t^S(i),
\label{eq:qbar_sum}
\end{equation}
with $\hat P_t(S): = \frac{1}{t}\sum_{\tau = 1}^t\1(S_\tau = S)$. 
To ease the notation, let us denote $S_i = \1(i \in S)$. 

We now proceed to bound $|q_t^*(i) - \bar q_t(i)|$ for any item $i \in [K],\, t \in [T]$. Let us denote $Q_t(i) = \sum_{S \in 2^{[K]}}q_t^S(i)$. Note that $Q_t(i) \le 2^K$. Let us also define for simplicity $\alpha(S) = 2 P(S)(1-P(S))\ln (2^K/\delta)$ and $\beta = \frac{2}{3} \ln (2^K/\delta)$ the terms in the right-hand-side of~\eqref{eq:event_general}. 
Then, following the above claims we note that from \eqref{eq:event_general} with probability at least $(1-\delta)$,
	
\begin{align*}
| q^*_t(i) - \bar q_t(i)| & \leq \sum_{S \in 2^{[K]}} q_t^S(i) \big|P(S) - \hat P_t(S)\big|  \hspace*{2cm} \leftarrow \text{from \eqref{eq:qstar_sum} and~\eqref{eq:qbar_sum}} \\
&  \le  \sum_{S \in 2^{[K]}}  q_t^S(i) \bigg(\sqrt{\frac{\alpha(S)}{t}} + \frac{\beta}{t}\bigg) \hspace*{1.9cm} \leftarrow \text{from \eqref{eq:event_general}} \\
& \le \frac{\beta 2^K}{t} +  Q_t(i) \sum_{S \in 2^{[K]}}  \frac{q_t^S(i)}{Q_t(i)}\sqrt{\frac{\alpha(S)}{t}}  \hspace*{1cm} \leftarrow \text{because $Q_t(i) \leq 2^K$} \\
& \leq \frac{\beta 2^K}{t} +  Q_t(i)  \sqrt{\sum_{S \in 2^{[K]}}  \frac{q_t^S(i)\alpha(S)}{Q_t(i)t}}  \hspace*{1.2cm} \leftarrow \text{from Jensen's inequality} \\
& \le \sqrt{\frac{2^{K+1}}{t} \ln \frac{2^K}{\delta}}+ \frac{2^{K+1}}{3t} \ln \frac{2^K}{\delta},
\end{align*}
where the last inequality is because $Q_t(i) \leq 2^K$ and
\[
	\sum_{S \in 2^{[K]}} q_t^{S}(i) \alpha(S) \leq \sum_{S \in 2^{[K]}} \alpha(S) \leq 2 \log(2^K/\delta) \sum_{S \in 2^{[K]}} P(S) \leq 2 \log(2^K/\delta) \,.
\]
This concludes the proof of the Lemma.
\end{proof}

\subsection{Proof of Thm.~\ref{thm:reg_icat_g}}
\label{app:reg_icat_g}

\ubicatg*

\begin{proof}
The proof is almost similar to the proof of Thm.~\ref{thm:reg_icat}. We use the same notation introduced in that proof for ease of understanding. Same as before the proof relies on the Lemmas~\ref{lem:first_term},~\ref{lem:second_term}, and~\ref{lem:var_term} that are satisfied but for 
\[
	\lambda_t = \sqrt{\frac{2^{K+1}}{t} \ln \frac{2^K}{\delta}}+ \frac{2^{K+1}}{3t} \ln \frac{2^K}{\delta} \,,
\]
since we need to use the concentration Lem.~\ref{lem:conc_barq_g} instead of Lem.~\ref{lem:conc_barq_p}. Now, recall from Eqn. \eqref{eq:reg}, the actual regret definition of our proposed algorithm: 
\[
R_{T}(\text{\algicatg}) = \max_{\pi: 2^{[K]} \mapsto [K]}  \sum_{t=1}^{T}\E\Big[\ell_t(i_t) - \ell_t(\pi(S_t))\Big]
\]
Denoting the best policy $\pi^* := \arg\min_{\pi: 2^{[K]}\mapsto [K]}\sum_{t = 1}^{T}\E_{S_t \sim P_{\a}}[\ell(\pi(S_t))]$, and combining the claims from Lem.~\ref{lem:first_term},~\ref{lem:second_term} and~\ref{lem:var_term}, we get following the same arguments as the ones of Thm.~\ref{thm:reg_icat}:

\begin{align*}
R_{T}&(\text{\algicatg}) =  \sum_{t=1}^{T}\E\Big[\ell_t(i_t) - \ell_t(\pi^*(S_t))\Big]\\
& \le \sum_{t=1}^{T}\E\big[\big< \q_t^S,\hat \ell_t \big> + 2K \lambda_t + \frac{2\delta}{\lambda_t} \big] - \E[\hat \ell_t(\pi^*(S_t))] \hspace*{2.4cm} \leftarrow \text{from Lem.~\ref{lem:first_term} and Lem.~\ref{lem:second_term}} \\
& =  \sum_{t=1}^{T}\E\Big[\big< \q_t^S,\hat \ell_t \big> - \hat \ell_t(\pi^*(S)) \Big] +  2\sum_{t = 1}^{T} \big(K\lambda_t + \frac{\delta}{\lambda_t}\big) \\ 
& \leq  \frac{\log K}{\eta} +  \eta \sum_{t = 1}^T \E\bigg[\sum_{k \in S} q_t^S(k)\hat \ell_t(k)^2 \bigg] +  2\sum_{t = 1}^{T} \big(K\lambda_t + \frac{\delta}{\lambda_t}\big)  \hspace*{1cm} \leftarrow \text{from Inequality~\eqref{eq:RegretEXP3}} \\
& \leq \frac{\log K}{\eta} +  \eta KT +  \sum_{t = 1}^{T} \big(2 K\lambda_t + \frac{2\delta}{\lambda_t} + \frac{\eta \delta}{\lambda_t^2} \big)  \hspace*{3.2cm} \leftarrow \text{from Lem.~\ref{lem:var_term}} \\
& \leq  \frac{\log K}{\eta} +  \eta KT +  \sum_{t = 1}^{T} \big(2 K\lambda_t + \frac{3 \delta}{\lambda_t^2} \big)  \hspace*{4.1cm} \leftarrow \text{because $\eta \leq 1$ and $\lambda_t \leq 1$}
\end{align*}
To conclude the proof, it only remains to compute the sums and to choose the parameters $\delta = 2^K / T^2$ and $\eta = \sqrt{(\log K)/KT}$. 
Using $\lambda_t \geq \sqrt{2^{K+1} /t}$, we have
\[
	\sum_{t=1}^T \frac{\delta}{\lambda_t^2} \leq  \frac{\delta T^2}{2^{K+1}} \leq 1
\]
and since $\log(2^K/\delta) = 2 \log T$, 
\[
	\lambda_t = \sqrt{\frac{2^{K+1}}{t} \ln T }+ \frac{2^{K+1}}{3t} \ln T
\]
which entails
\[
	\sum_{t=1}^T \lambda_t \leq \sqrt{2^{K+1} T \log T} + \frac{2^{K+1}}{3} (\log T)(1+ \log T) \leq \sqrt{2^{K+1} T \log T} + 2^{K+2} (\log T)^2\,.
\]
Substituting $\eta$ and these upper-bounds in the regret upper-bound concludes the proof:
\begin{align*}
	R_{T}(\text{\algicatg}) 
		& \leq 2 \sqrt{KT \log K} + K \sqrt{2^{K+3} T \log T} + K2^{K+3}(\log T)^2 \\
		& \leq K \sqrt{2^{K+4} T \log T} + K2^{K+3}(\log T)^2 \,.
\end{align*}

As for the complexity, the only difference with Alg.~\ref{alg:icat} comes from the computation of $\bar q_t(i)$. The latter can also be performed with a computational cost of $O(tK)$. Yet, the algorithm needs to keep in memory the empirical distribution of $S_1,\dots,S_t$. Thus, a space complexity of $O(K + \min\{tK,2^K\})$. 
\end{proof} 

}


\end{document}